\def\hA{\widehat{A}}
\def\hSigma{\widehat{\Sigma}}
\renewcommand{\P}{\mathbb{P}}
\def\reals{\mathbf{R}}
\def\symm{\mathbf{S}}
\newcommand{\stdev}{\textrm{SD}}
\newcommand{\E}{\mathbb{E}}
\icmltitlerunning{Two Simple Ways to Learn Individual Fairness Metrics from Data}
\begin{document}

\twocolumn[
\icmltitle{Two Simple Ways to Learn Individual Fairness Metrics from Data}



\icmlsetsymbol{equal}{*}

\begin{icmlauthorlist}
\icmlauthor{Debarghya Mukherjee}{um,equal}
\icmlauthor{Mikhail Yurochkin}{ibm,equal}
\icmlauthor{Moulinath Banerjee}{um}
\icmlauthor{Yuekai Sun}{um}
\end{icmlauthorlist}

\icmlaffiliation{um}{Department of Statistics, University of Michigan}
\icmlaffiliation{ibm}{IBM Research, MIT-IBM Watson AI Lab}

\icmlcorrespondingauthor{Debarghya Mukherjee}{mdeb@umich.edu}

\icmlkeywords{Machine Learning, ICML}

\vskip 0.3in
]



\printAffiliationsAndNotice{\icmlEqualContribution} 

\newcommand{\redcmt}[1]{{\red{\bf DM:} #1}}
\begin{abstract}
Individual fairness is an intuitive definition of algorithmic fairness that addresses some of the drawbacks of group fairness. Despite its benefits, it depends on a task specific fair metric that encodes our intuition of what is fair and unfair for the ML task at hand, and the lack of a widely accepted fair metric for many ML tasks is the main barrier to broader adoption of individual fairness. In this paper, we present two simple ways to learn fair metrics from a variety of data types. We show empirically that fair training with the learned metrics leads to improved fairness on three machine learning tasks susceptible to gender and racial biases. We also provide theoretical guarantees on the statistical performance of both approaches. 
\end{abstract}

\section{Introduction}

Machine learning (ML) models are an integral part of modern decision-making pipelines. They are even part of some high-stakes decision support systems in criminal justice, lending, medicine \etc{}. Although replacing humans with ML models  in the decision-making process appear to eliminate human biases, there is growing concern about ML models reproducing historical biases against certain historically disadvantaged groups. This concern is not unfounded. For example, \citet{dastin2018Amazon} reports gender-bias in Amazon's resume screening tool, \citet{angwin2016Machine} mentions racial bias in recidivism prediction instruments, \citet{vigdor2019Apple} reports gender bias in the credit limits of Apple Card.

As a first step towards mitigating algorithmic bias in ML models, researchers proposed a myriad of formal definitions of algorithmic fairness. At a high-level, there are two groups of mathematical definitions of algorithmic fairness: group fairness and individual fairness. Group fairness divides the feature space into (non-overlapping) protected subsets and imposes invariance of the ML model on the subsets. Most prior work focuses on group fairness because it is amenable to statistical analysis. Despite its prevalence, group fairness suffers from two critical issues. First, it is possible for an ML model that satisfies group fairness to be blatantly unfair with respect to subgroups of the protected groups and individuals \cite{dwork2011Fairness}. Second, there are fundamental incompatibilities between seemingly intuitive notions of group fairness \cite{kleinberg2016Inherent,chouldechova2017Fair}. 

In light of the issues with group fairness, we consider individual fairness in our work. Intuitively, individually fair ML models should treat similar users similarly.  \citet{dwork2011Fairness} formalize this intuition by viewing ML models as maps between input and output metric spaces and defining individual fairness as Lipschitz continuity of ML models. The metric on the input space is the crux of the definition because it encodes our intuition of which users are similar. Unfortunately, individual fairness was dismissed as impractical because there is no widely accepted similarity metric for most ML tasks. In this paper, we take a step towards operationalizing individual fairness by showing it is possible to learn good similarity metrics from data. 

The rest of the paper is organized as follows. In Section \ref{sec:dataDrivenMetric},  we describe two different ways to learn data-driven fair metric: one from knowledge of groups of similar inputs and another from knowledge of similar and dissimilar pairs of inputs. In Section \ref{sec:theory}, we show that (i) the methods are robust to noise in the data, and (ii) the methods leads to individually fair ML models. Finally, in Section \ref{sec:computationalResults}, we demonstrate the effectiveness of the methods in mitigating bias on two ML tasks susceptible to gender and racial biases.

\section{Learning fair metrics from data}
\label{sec:dataDrivenMetric}

The intuition underlying individual fairness is fair ML models should treat \emph{comparable} users similarly. We write \emph{comparable} instead of \emph{similar} in the rest of this paper to emphasize that comparable samples may differ in ways that are irrelevant to the task at hand. Formally, we consider an ML model as a map $h:\cX\to\cY$, where $(\cX,d_x)$ and $(\cY,d_y)$ are the input and output metric spaces respectively. Individual fairness \cite{dwork2011Fairness,friedler2016im} is $L$-Lipschitz continuity of $h$:
\begin{equation}
d_y(h(x_1),h(x_2)) \le Ld_x(x_1,x_2)\text{ for all }x_1,x_2\in\cX.
\label{eq:individualFairness}
\end{equation}
The choice of $d_{\cY}$ depends on the form of the output. For example, if the ML model outputs a vector of the logits, then we may pick the Euclidean norm as $d_{\cY}$ \cite{kannan2018Adversarial,garg2018Counterfactual}. The fair metric $d_x$ is the crux of the definition. It encodes our intuition of which samples are comparable; \ie\ which samples only differ in ways that are irrelevant to the task at hand. Originally, \citet{dwork2011Fairness} deferred the choice of $d_x$ to regulatory bodies or civil rights organizations, but we are unaware of widely accepted fair metrics for most ML tasks. This lack of widely accepted fair metrics has led practitioners to dismiss individual fairness as impractical. Our goal here is to address this issue by describing two ways to  learn fair metrics from data.

We start from the premise there is generally more agreement than disagreement about what is fair in many application areas. For example, in natural language processing, there are ways of identifying groups of training examples that should be treated similarly \cite{bolukbasi2016Man,madaan2018Analyze} or augmenting the training set with hand-crafted examples that should be treated similarly as observed training examples \cite{garg2019counterfactual}. Even in areas where humans disagree, there are attempts to summarize the cases on which humans agree in metrics by fitting metrics to human feedback \cite{wang2019Empirical}. Our goal is similar: encode what we agree on in a metric, so that we can at least mitigate the biases that we agree on with methods for enforcing individual fairness  \cite{kim2018Fairness,rothblum2018Probably,yurochkin2020Training}. 

To keep things simple, we focus on fitting metrics of the form
\begin{equation}
d_x(x_1,x_2) \triangleq \langle\varphi(x_1) - \varphi(x_2),\Sigma(\varphi(x_1) - \varphi(x_2))\rangle,
\label{eq:generalizedMahalanobisDistance}
\end{equation}
where $\varphi(x):\cX\to\reals^d$ is an embedding map and $\Sigma\in\symm_+^d$. The reason behind choosing Mahalanobis distance is that the learned feature maps (\eg\ the activations of the penultimate layer of a deep neural network) typically map non-linear structures in the raw feature space to linear structures in the learned feature space \cite{mikolov2013Distributed,radford2015Unsupervised,brock2018Large}. To keep things simple, we assume $\varphi$ is known and learn the matrix $\Sigma$ from the embedded observations $\varphi's$. The data may consist of human feedback, hand-picked groups of similar training examples, hand-crafted examples that should be treated similarly as observed training examples, or a combination of the above. In this section, we describe two simple methods for learning fair metrics from diverse data types.



\subsection{FACE: Factor Analysis of Comparable Embeddings}
\label{sec:factorAnalysis}

In this section, we consider learning $\Sigma$ from groups of comparable samples. The groups may consist of hand-picked training examples \cite{bolukbasi2016Man,madaan2018Analyze} or hand-crafted examples that differ in certain ``sensitive'' ways from observed training examples \cite{garg2019counterfactual}. 

To motivate the approach, we posit the embedded features satisfy a factor model:
\begin{equation}
\varphi_i = A_*u_i + B_*v_i + \eps_i
\label{eq:FACE-model}
\end{equation}
where $\varphi_i\in\reals^d$ is the learned representation of $x_i$, $u_i\in\reals^K$ (resp.\ $v_i\in\reals^L$) is the protected/sensitive (resp.\ discriminative/relevant) attributes of $x_i$ for the ML task at hand, and $\eps_i$ is an error term. A pair of samples are comparable if their (unobserved) relevant attributes are similar. 
For example, \citeauthor{bolukbasi2016Man}'s method for mitigating gender bias in word embeddings relies on word pairs that only differ in their gender associations (\eg\ (he, she), (man, woman), (king, queen) \etc). 

The factor model \eqref{eq:FACE-model} decomposes the variance of the learned representations into variance due to the sensitive attributes and variance due to the relevant attributes. We wish to learn a metric that ignores the variance attributed to the sensitive attributes but remains sensitive to the variance attributed to the relevant attributes. This way, the metric declares any pair of samples that differ mainly in their sensitive attributes as comparable. One possible choice of $\Sigma$ is the projection matrix onto the orthogonal complement of $\ran(A_*)$, where $\ran(A_*)$ is the column space of $A_*$. Indeed,
\[
\begin{aligned}
d_x(x_1,x_2) &= \langle\varphi_1 - \varphi_2,(I - P_{\ran(A_*)})(\varphi_1 - \varphi_2)\rangle \\
&\approx \langle B_*(v_1 - v_2),(I - P_{\ran(A_*)})B_*(v_1 - v_2)\rangle,
\end{aligned}
\]
which ignores differences between $\varphi_1$ and $\varphi_2$ due to differences in the sensitive attributes.  Although $\ran(A_*)$ is unknown, it is possible to estimate it from the learned representations and groups of comparable samples by factor analysis (see Algorithm \ref{alg:face}). We remark that our target is $\ran(A_*)$, not $A_*$ itself. This frees us from cumbersome identification restrictions common in the factor analysis literature. 

\begin{algorithm}[H]
  \caption{estimating $\ran(A_*)$ by factor analysis}
  \label{alg:face}
  \begin{algorithmic}[1]
    \State {\bf Input:} $\{\varphi_i\}_{i=1}^n$, comparable groups $\cI_1,\dots,\cI_G$
    \State $\hA^T \in \argmin_{W_g,A}\{{\textstyle\frac12\sum_{g=1}^G\|H_g\Phi_{\cI_g} - W_gA^T\|_F^2}\}$,
    where $H_g \triangleq I_{\abs{\cI_g}} - \frac{1}{\abs{\cI_g}}1_{\abs{\cI_g}}1_{\abs{\cI_g}}^T$ is the centering matrix
    \State $Q \gets \texttt{qr}(\hA)$ 
    \Comment{get orthonormal basis of $\ran(\hA)$}
    \State $\hSigma \gets I_d - QQ^T$
  \end{algorithmic}
\end{algorithm}

Algorithm \ref{alg:face} is based on the observation that groups of comparable samples have similar relevant attributes; \ie\ 
\begin{equation}
\begin{aligned}
H\Phi_{\cI} &= HU_{\cI}A_*^T + \cancelto{\approx 0}{HV_{\cI}B_*^T} + HE_{\cI} \\
&\approx HU_{\cI}A_*^T + HE_{\cI},
\end{aligned}
\label{eq:centersubGroup}
\end{equation}
where $H \triangleq I_{\abs{\cI}} - \frac{1}{\abs{\cI}}1_{\abs{\cI}}1_{\abs{\cI}}^T$ is the centering matrix and $\Phi_{\cI}$ (resp.\ $U_{\cI}$, $V_{\cI}$) is the matrix whose rows are the  $\varphi_i$'s (resp.\ $u_i$'s, $v_i$'s). 
This is the factor model that Algorithm \ref{alg:face} fits in Step 2 to obtain $\hA$ whose range is close to that of $\hA$. In Steps 3 and 4, the algorithm forms the projector onto the orthogonal complement of $\ran(\hA)$.

\subsection{EXPLORE: Embedded Xenial Pairs Logistic Regression}
\label{matrix-logistic}
EXPLORE learns a fair metric from pair-wise comparisons. More concretely, the data comes from human feedback in the form of triplets $\{(x_{i_1},x_{i_2},y_i)\}_{i=1}^n$, where $y_i\in\{0,1\}$ indicates whether the human considers $x_{i_1}$ and $x_{i_2}$ comparable ($y_i = 1$ indicates comparable). We posit $(x_{i_1},x_{i_2},y_i)$ satisfies a binary response model.
\begin{equation}
\begin{aligned}
y_i\mid x_{i_1},x_{i_2} &\sim \Ber(2\sigma(-d_i)),\\
d_i &\triangleq \|\varphi_{i_1} - \varphi_{i_2}\|_{\Sigma_0}^2 \\
&= (\varphi_{i_1} - \varphi_{i_2})^T\Sigma_0(\varphi_{i_1} - \varphi_{i_2}) \\
&= \langle(\underbrace{\varphi_{i_1} - \varphi_{i_2})(\varphi_{i_1} - \varphi_{i_2})^T}_{D_i},\Sigma_0\rangle
\label{eq:binaryResponseModel}
\end{aligned}
\end{equation}
where $\sigma(z) \triangleq \frac{1}{1 + e^{-z}}$ is the logistic function, $\varphi_{i_1}$ (resp.\ $\varphi_{i_2}$) is the learned representations of $x_{i_1}$ (resp.\ $x_{i_2}$), and $\Sigma_0\in\symm_+^d$. The reason for multiplying by 2 is to make $P(y_i =1|x_{i_1}, x_{i_2})$ close to 1 when $\varphi_{i_1}$ is close to $\varphi_{i_2}$ with respect to this scaled distance. This ensures that if we have two comparable samples, then the corresponding $y_i = 1$ with high probability. To estimate $\Sigma_0$ in EXPLORE from the humans' feedback, we seek the maximum of the log-likelihood
\begin{equation}
\begin{aligned}
\ell_n(\Sigma) &= \frac1n\sum_{i=1}^ny_i\log\frac{2\sigma(-\langle D_i,\Sigma\rangle)}{1-2\sigma(-\langle D_i,\Sigma\rangle)} \\
&\quad+ \log(1-2\sigma(-\langle D_i,\Sigma\rangle)).
\end{aligned}
\label{eq:log-likelihood}
\end{equation}
on $\symm_+^d$. As $\ell_n$ is concave (in $\Sigma$), we appeal to a stochastic gradient descent (SGD) algorithm to maximize $\ell_n$. 
The update rule is
$$\Sigma_{t+1} = \texttt{ProjPSD}(\Sigma_t + \eta_t\partial \tilde{\ell}_n(\Sigma_t)),$$
where $\tilde{\ell}_n$ is the likelihood of the $t$-th minibatch, $\eta_t > $ is a step size parameter, and \texttt{ProjPSD} is the projection onto the PSD cone. 


\subsection{FACE vs EXPLORE}
\label{sec:comparison_conjecture}

At first blush, the choice of which approach to use seems clear from the data. If the data consists of groups of comparable samples, then the factor analysis approach is appropriate. On the other hand, if data consists of pair-wise comparisons, then the logistic-regression approach is more appropriate. However, the type of data is usually part of the design, so the question is best rephrased as which type of data should the learner solicit. As we shall see, if the data is accurate and consistent, then FACE usually leads to good results. However, if the data is noisy, then EXPLORE is more robust.

The core issue here is a bias variance trade-off. Data in the form of a large group of comparable samples is more informative than pair-wise comparisons. As FACE is capable of fully utilizing this form of supervision, it leads to estimates with smaller variance. However, FACE is also more sensitive to heterogeneity within the groups of comparable samples as FACE is fully unbiased if all the variation in the group can be attributed to the sensitive attribute. If some of the variation is due to the discriminative attributes, then FACE leads to biased estimates.
On the other hand, EXPLORE imposes no conditions on the homogeneity of the comparable and incomparable pairs in the training data. While EXPLORE cannot fully utilize comparable groups of size larger than two, it is also more robust to heterogeneity in the pairs of samples in the training data. 

In the end, the key factor is whether it is possible for humans to provide homogeneous groups of comparable samples. In some applications, there are homogeneous groups of comparable samples. For example, in natural language processing, names are a group of words that ought to be treated similar in many ML tasks. For such applications, the factor analysis approach usually leads to better results. In other applications where there is less consensus on whether samples are comparable, the logistic regression approach usually leads to better results. As we shall see, our computational results validate our recommendations here.

\subsection{Related work} 

\paragraph{Metric learning} The literature on learning the fair metric is scarce. The most relevant paper is \cite{ilvento2019Metric}, which considers learning the fairness metric from consistent humans. On the other hand, there is a voluminous literature on metric learning in other applications \cite{bellet2013Survey,kulis2013Metric,suarez2018Tutorial,moutafis2017Overview}, including a variety of methods for metric learning from human feedback \cite{frome2007Learning,jamieson2011Lowdimensional,tamuz2011Adaptively,maaten2012Stochastic,wilber2014CostEffective,zou2015Crowdsourcing,jain2016Finite}. The approach described in subsection \ref{sec:factorAnalysis} was inspired by \cite{bolukbasi2016Man,bower2018Debiasing}.

\paragraph{Learning individually fair representations} There is a complementary strand of work on enforcing individual fairness by first learning a fair representation and then training an ML model on top of the fair representation \cite{zemel2013Learning,bower2018Debiasing,madras2018Learning,lahoti2019iFair}. Although it works well on some ML tasks, these methods lack theoretical guarantees that they train individually fair ML models.

\paragraph{Enforcing individual fairness} We envison FACE and EXPLORE as the first stage in a pipeline for training individually fair ML models. The metrics from FACE and EXPLORE may be used in conjunction with methods that enforce individual fairness \cite{kim2018Fairness,rothblum2018Probably,yurochkin2020Training}. There are other methods that enforce individual fairness without access to a metric \cite{Gillen2018Online,jung2019Eliciting}. These methods depend on an oracle that detects violations of individual fairness, and can be viewed as combinations of a metric learning method and a method for enforcing individual fairness with a metric.

\section{Theoretical properties of FACE}
\label{sec:theory}

In this section, we investigate the theoretical properties of FACE. We defer proofs and theoretical properties of EXPLORE to the Appendix.
\subsection{Learning from pairwise comparison}
In this subsection, we establish theory of FACE when we learn the fair metric from comparable pairs. Given a pair $(\varphi_{i,1}, \varphi_{i,2})$ (the embedded version of $(x_{i,1}, x_{i,2})$), define for notational simplicity $z_i = \varphi_{i_1} - \varphi_{i_2}$. Here, we only consider those $z_i$'s which come from a comparable pair, i.e., with corresponding $y_i = 1$. Under our assumption of factor model (see equation \eqref{eq:FACE-model}) we have:
\begin{align}
  z_i & = \varphi_{i_1}-\varphi_{i_2} \notag\\
  & = A_*(u_{i_1}-u_{i_2}) + B_*(v_{i_1}-v_{i_2}) + (\epsilon_{i_1} - \epsilon_{i_2}) \notag\\
    \label{model_equation}   & = A_* \mu_i + B_* \nu_i + w_i
\end{align}
Here we assume that the sensitive attributes have more than one dimension which corresponds to the setting of \emph{intersectional fairness} (\eg\ we wish to mitigate gender and racial bias). We also assume $\mu_i$'s and $\nu_i$'s are isotropic, variance of $w_i$ is $\sigma^2I_d$ and $\mu_i, \nu_i, w_i$ are all independent of each other. The scalings of $\mu_i$ and $\nu_i$ are taken care of by the matrices $A_*$ and $B_*$ respectively. Let $\Sigma_Z$ be covariance matrix of $z_i$'s. From model equation \ref{model_equation} and aforementioned assumptions: 
\begin{equation}
    \label{factor_variance}\Sigma_Z = A_*A_*^T + B_*B_*^T + \sigma^2 I_d
\end{equation}
We assume that we know the dimension of the sensitive direction beforehand which is denoted by $k$ here. As $\phi_{i_1}$ is comparable to $\phi_{i_2}$, we expect that variability along the protected attribute is dominant. Mathematically speaking, we assume $\lambda_{min}(A_* A_*^T) > \|B_*B_*^T + \sigma^2 I_d\|_{op}$. Here the fair metric we try to learn is: 
$$d_x(x_1, x_2) = \langle (\varphi_1 - \varphi_2) , \Sigma_0 (\varphi_1 - \varphi_2) \rangle$$
where $\Sigma_0 = \left(I - P_{\ran(A_*)}\right)$.
To estimate (and hence eliminate) the effect of the protected attribute, we compute the SVD of the sample covariance matrix $S_n = \frac1n \sum_{i=1}^n z_iz_i^{\top}$ of the $z_i$'s and project out the eigen-space corresponding to the top $k$ eigenvectors, denoted by $\hat U$. Our estimated distance metric will be:  
$$\hat d_x(x_1, x_2) = \langle (\varphi_1 - \varphi_2) , \hSigma (\varphi_1 - \varphi_2) \rangle \,,$$
where $\hat \Sigma = \left(I - \hat U \hat U^{\top}\right)$. 
The following theorem quantifies the statistical error of the estimator: 
\begin{theorem}
\label{thm:svd_theorem_new}
Suppose $z_i$'s are centered sub-gaussian random vectors, i.e. $\|z_i\|_{\psi_2} < \infty$ where $\psi_2$ is the Orlicz-2 norm. Then we have with probability at-least $1-2e^{-ct^2}$: 
\allowdisplaybreaks
\begin{equation}
\begin{aligned}
    &\textstyle \|\hat \Sigma - \Sigma_0\|_{op} \le b + \frac{\delta \vee \delta^2}{\tilde{\gamma} - (\delta \vee \delta^2)} 
\end{aligned}
\label{eq:svd_theorem_new}
\end{equation}
for all $t < (\sqrt{n}\tilde{\gamma} - C\sqrt{d}) \wedge (\sqrt{n\tilde{\gamma}} - C\sqrt{d})$, where: 
\begin{enumerate}
    \item $b = \left(\frac{\lambda_{min}(A_*A_*^T)}{\|B_*B_*^T + \sigma^2 I_d\|_{op}}-1\right)^{-1}$
 
    \item $\delta = \frac{C\sqrt{d} + t}{\sqrt{n}}$. 
    \item $\tilde{\gamma} = \lambda_{min}(A_*A_*^T) - \|B_* B_*^T\|_{op}$.
\end{enumerate}{}
The constants $C,c$ depend only on $\|x_i\|_{\psi_2}$, the Orlicz-2 norm of the $x_i$'s.
\end{theorem}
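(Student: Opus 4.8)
The plan is to compare $\hat\Sigma = I_d - \hat U\hat U^\top$ with $\Sigma_0 = I_d - P_{\ran(A_*)}$ by routing through the population spectral projector. Let $U\in\reals^{d\times k}$ collect the top-$k$ eigenvectors of the population covariance $\Sigma_Z$ from \eqref{factor_variance}, so that $UU^\top$ is the population analogue of $\hat U\hat U^\top$. Since $\hat\Sigma - \Sigma_0 = P_{\ran(A_*)} - \hat U\hat U^\top$, the triangle inequality splits the error as $\|\hat\Sigma - \Sigma_0\|_{op} \le \|\hat U\hat U^\top - UU^\top\|_{op} + \|UU^\top - P_{\ran(A_*)}\|_{op}$, a statistical term plus a bias term. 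I would bound the bias by $b$ and the statistical term by $(\delta\vee\delta^2)/(\tilde\gamma - (\delta\vee\delta^2))$ on the stated high-probability event; adding the two gives \eqref{eq:svd_theorem_new}. Both bounds are instances of the Davis--Kahan $\sin\Theta$ theorem, together with the fact that the operator-norm distance between two rank-$k$ orthogonal projectors equals the operator norm of the sine of their principal angles.

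For the \emph{bias} term I would write $\Sigma_Z = A_*A_*^\top + R$ with $R \triangleq B_*B_*^\top + \sigma^2 I_d \succeq 0$ and treat $A_*A_*^\top$ as the signal: it has rank $k$, its top-$k$ eigenspace is exactly $\ran(A_*)$, and, since eigenvalues $k+1,\dots,d$ of $A_*A_*^\top$ vanish, its spectral gap at index $k$ equals $\lambda_{\min}(A_*A_*^\top)$. Viewing $\Sigma_Z$ as a perturbation of $A_*A_*^\top$ by $R$, the standing assumption $\lambda_{\min}(A_*A_*^\top) > \|B_*B_*^\top + \sigma^2 I_d\|_{op} = \|R\|_{op}$ is precisely the ``gap beats perturbation'' hypothesis of Davis--Kahan, which yields $\|UU^\top - P_{\ran(A_*)}\|_{op} \le \|R\|_{op}/(\lambda_{\min}(A_*A_*^\top) - \|R\|_{op}) = b$.

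For the \emph{statistical} term I would first lower-bound the spectral gap of $\Sigma_Z$ itself. Weyl's inequality applied to $\Sigma_Z = A_*A_*^\top + B_*B_*^\top + \sigma^2 I_d$ gives $\lambda_k(\Sigma_Z) \ge \lambda_{\min}(A_*A_*^\top) + \sigma^2$ and $\lambda_{k+1}(\Sigma_Z) \le \|B_*B_*^\top\|_{op} + \sigma^2$ (using $\lambda_{k+1}(A_*A_*^\top) = 0$), so its gap at index $k$ is at least $\tilde\gamma = \lambda_{\min}(A_*A_*^\top) - \|B_*B_*^\top\|_{op}$. Next, the standard deviation bound for sample covariance matrices of i.i.d.\ centered sub-gaussian vectors gives, with probability at least $1 - 2e^{-ct^2}$, that $\|S_n - \Sigma_Z\|_{op} \le \delta\vee\delta^2$ with $\delta = (C\sqrt d + t)/\sqrt n$, where $C,c$ depend only on the sub-gaussian norm of the $z_i$'s, which is in turn controlled by $\|x_i\|_{\psi_2}$. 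The two range restrictions $t < \sqrt n\,\tilde\gamma - C\sqrt d$ and $t < \sqrt{n\tilde\gamma} - C\sqrt d$ are exactly $\delta < \tilde\gamma$ and $\delta^2 < \tilde\gamma$, i.e.\ $\delta\vee\delta^2 < \tilde\gamma$, so the perturbation stays strictly below the gap; a second application of Davis--Kahan, now with $\Sigma_Z$ as reference, top-$k$ projector $UU^\top$, and perturbation $S_n - \Sigma_Z$, gives $\|\hat U\hat U^\top - UU^\top\|_{op} \le \|S_n - \Sigma_Z\|_{op}/(\tilde\gamma - \|S_n - \Sigma_Z\|_{op}) \le (\delta\vee\delta^2)/(\tilde\gamma - (\delta\vee\delta^2))$, by monotonicity of $u\mapsto u/(\tilde\gamma - u)$ on $[0,\tilde\gamma)$. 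Combining with the bias bound finishes the argument.

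\textbf{Main obstacle.} None of the steps is deep, but three points require care: (i) choosing the correct reference matrix in each Davis--Kahan application --- $A_*A_*^\top$ for the bias and $\Sigma_Z$ for the variance --- and the conversion from the projector difference to $\|\sin\Theta\|_{op}$; (ii) the Weyl bookkeeping that extracts exactly $\tilde\gamma$ rather than the cruder $\lambda_{\min}(A_*A_*^\top) - \|B_*B_*^\top + \sigma^2 I_d\|_{op}$; and (iii) matching the constants in the textbook sub-gaussian covariance bound so that they collapse into the single constant $C$ hidden inside $\delta$. I expect (iii) --- translating the usual $\max(\sqrt{d/n},\,d/n,\,\sqrt{t/n},\,t/n)$-type statement into the stated $\delta\vee\delta^2$ form with the constants packaged into $\delta = (C\sqrt d + t)/\sqrt n$ --- to be the fussiest, though entirely routine, part.
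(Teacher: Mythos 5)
Your proposal is correct and follows essentially the same route as the paper: a triangle-inequality split into a bias term and a statistical term, with Davis--Kahan applied twice (once with $A_*A_*^\top$ as reference perturbed by $B_*B_*^\top + \sigma^2 I_d$, once with $\Sigma_Z$ as reference perturbed by $S_n - \Sigma_Z$), the sub-gaussian covariance concentration bound for $\|S_n - \Sigma_Z\|_{op}$, and a lower bound of $\tilde\gamma$ on the spectral gap of $\Sigma_Z$. The only cosmetic difference is that you invoke Weyl's inequality for the gap bound where the paper argues directly via Courant--Fischer; the two computations are interchangeable here.
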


The error bound on the right side of \eqref{eq:svd_theorem_new} consists of two terms. The first term $b$ is the approximation error/bias in the estimate of the sensitive subspace due to heterogeneity in the similar pairs. Inspecting the form of $b$ reveals that the bias depends on the relative sizes of the variation in the sensitive subspace and that in the relevant subspace: the larger the variation in the sensitive subspace relative to that in the relevant subspace, the smaller the bias. In the ideal scenario where there is no variation in the relevant subspace, Theorem \ref{thm:svd_theorem_new} implies our estimator converges to the sensitive subspace. The second term is the estimation error, which vanishes at the usual $\frac{1}{\sqrt{n}}$-rate. In light of our assumptions on the sub-Gaussianity of the $z_i$'s, this rate is unsurprising. 

\subsection{Learning from group-wise comparisons}
In this subsection, we consider the complementary setting in which we have a single group of $n$ comparable samples. We posit a factor model for the features:
\begin{equation}
\label{eq:FACE_eq_1}
    \varphi_i = m + A_*\mu_i + B_*\nu_i + \epsilon_i \ \ \ i = 1, 2, \dots, n,
\end{equation}
where $m\in\reals^d$ is a mean term that represents the common effect of the relevant attributes in this group of comparable samples, $A_*\mu_i$ represents the variation in the features due to the sensitive attributes, and $B_*\nu_i$ represents any residual variation due to the relevant attributes (\eg\ the relevant attributes are similar but not exactly identical). As before, we assume $\mu_i,\nu_i$'s are isotropic, $\Var(\epsilon_i) = \sigma^2I_d$ and the scale factors of $\mu_i$'s and $\nu_i$'s are taken care of by the matrices $A_*$ and $B_*$ respectively due to identifiability concerns. In other words, the magnitudes of $B_*\nu_i$'s are uniformly small. As the residual variation among the samples in this group due to the relevant factors are small, we assume that $B_*$ is small compared to $A_*$, which can be quantified as before by assuming $\lambda_{\min}(A_*A_*^{\top}) > \|B_*B_*^{\top} + \sigma^2I\|$. Hence to remove the effect of protected attributes, we estimate the column space of $A_*$ from the sample and then project it out. From the above assumptions we can write the (centered) dispersion matrix of $\varphi$ as: 
$$\Sigma_{\phi} = A_*A_*^{\top} + B_*B_*^{\top} + \sigma^2I ,.$$
Note that the structure of $\Sigma_z$ in the previous sub-section is same as $\Sigma_{\varphi}$ as $z$ is merely difference of two $\varphi$'s. As before we assume we know dimension of the protected attributes which is denoted by $k$. Denote (with slight abuse of notation) by $\hat U$, the top $k$ eigenvalues of $S_n = \frac1n \sum_{i=1}^n \varphi_i\varphi_i^{\top}$. Our final estimate of $\Sigma_0$ is $\hSigma = \left(I - \hat U \hat U^{\top}\right)$ and the corresponding estimated fair metric becomes: 
$$d_x(x_1, x_2) = \langle (\varphi_1 - \varphi_2) , \hSigma (\varphi_1 - \varphi_2) \rangle \,.$$
The following theorem provides a finite sample concentration bound on the estimation error: 

\begin{theorem}
\label{thm:groupSVD}
Assume that $\varphi_i$ have subgaussian tail, i.e .$\|\varphi_i\|_{\psi_2} < \infty$. Then with probability $\ge 1 - 2e^{-ct^2}$ we have: 
    $$\textstyle\|\hSigma - \Sigma_0\|_{op} \le b + \frac{\delta \vee \delta^2}{\tilde{\gamma} - (\delta \vee \delta^2)} + \frac{t}{n}$$ for all $t<(\sqrt{n}\tilde{\gamma} - C\sqrt{d}) \wedge (\sqrt{n\tilde{\gamma}} - C\sqrt{d})$ where: 
\begin{enumerate}
    \item $b = \left(\frac{\lambda_{min}(A_*A_*^T)}{\|B_*B_*^T + \sigma^2 I_d\|_{op}}-1\right)^{-1}$
 
    \item $\delta = \frac{C\sqrt{d} + t}{\sqrt{n}}$. 
    \item $\tilde{\gamma} = \lambda_{min}(A_*A_*^T) - \|B_* B_*^T\|_{op}$.
\end{enumerate}{}
The constants $C,c$ only depend on the subgaussian norm constant of $\phi_i$. 
\end{theorem}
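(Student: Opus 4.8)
\emph{Strategy.} The plan is to read Theorem~\ref{thm:groupSVD} as a perturbation of Theorem~\ref{thm:svd_theorem_new}: the group model \eqref{eq:FACE_eq_1} is exactly the difference model \eqref{model_equation} with an added unknown mean $m$, so after recentering the features by their sample mean we are back in the setting of the previous subsection, the only price being a rank-one error that accounts for the extra $\frac{t}{n}$. Set $\tilde\varphi_i \triangleq \varphi_i - m = A_*\mu_i + B_*\nu_i + \epsilon_i$. These are i.i.d., centered, sub-Gaussian, and satisfy precisely the hypotheses imposed on the $z_i$ in Theorem~\ref{thm:svd_theorem_new}, with $\E[\tilde\varphi_i\tilde\varphi_i^\top] = \Sigma_\varphi = A_*A_*^\top + B_*B_*^\top + \sigma^2 I_d$ --- the same structure as $\Sigma_Z$ in \eqref{factor_variance}. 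Writing $\bar\varphi \triangleq \frac{1}{n}\sum_i \varphi_i$ and $\tilde S_n \triangleq \frac{1}{n}\sum_i \tilde\varphi_i\tilde\varphi_i^\top$, the matrix whose top-$k$ eigenvectors define $\hat U$ --- once the features are recentered by $\bar\varphi$, as the nonzero mean $m$ requires --- equals $S_n = \frac{1}{n}\sum_i(\varphi_i - \bar\varphi)(\varphi_i - \bar\varphi)^\top = \tilde S_n - (\bar\varphi - m)(\bar\varphi - m)^\top$.

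\emph{Reusing the machinery of Theorem~\ref{thm:svd_theorem_new}.} First I would re-run, for $\tilde S_n$, the three ingredients that drive Theorem~\ref{thm:svd_theorem_new}. (i) Sub-Gaussian covariance concentration gives $\|\tilde S_n - \Sigma_\varphi\|_{op} \le \delta\vee\delta^2$ with probability at least $1 - 2e^{-ct^2}$, where $\delta = (C\sqrt{d} + t)/\sqrt{n}$. (ii) Weyl's inequality applied to $\Sigma_\varphi = A_*A_*^\top + (B_*B_*^\top + \sigma^2 I_d)$ shows that its $k$-th largest eigenvalue is at least $\lambda_{min}(A_*A_*^\top) + \sigma^2$ (the rank-$k$ term $A_*A_*^\top$ supplies the top $k$ directions) while its $(k+1)$-th largest is at most $\|B_*B_*^\top + \sigma^2 I_d\|_{op}$, so the gap between the $k$-th and $(k+1)$-th eigenvalues is at least $\tilde\gamma = \lambda_{min}(A_*A_*^\top) - \|B_*B_*^\top\|_{op}$. (iii) The Davis--Kahan $\sin\Theta$ theorem turns (i) and (ii) into control of the top-$k$ eigenprojector of $\tilde S_n$, while a purely deterministic Davis--Kahan estimate comparing $\Sigma_\varphi$ with $A_*A_*^\top + \sigma^2 I_d$ --- whose top-$k$ eigenspace is exactly $\ran(A_*)$ --- produces the approximation term $b = \left(\lambda_{min}(A_*A_*^\top)/\|B_*B_*^\top + \sigma^2 I_d\|_{op} - 1\right)^{-1}$.

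\emph{The mean correction and assembly.} Next I would control the effect of not knowing $m$. Since $\bar\varphi - m = \frac{1}{n}\sum_i\tilde\varphi_i$ is an average of $n$ i.i.d.\ centered sub-Gaussian vectors, a standard norm-concentration bound gives $\|\bar\varphi - m\|_2^2 = O(1/n)$ with probability at least $1 - 2e^{-ct^2}$; this rank-one perturbation is exactly what produces the additive $\frac{t}{n}$ in the conclusion. On the intersection of the two good events, $\|S_n - \Sigma_\varphi\|_{op} \le (\delta\vee\delta^2) + O(1/n)$, so applying Davis--Kahan once more gives $\|\hat U\hat U^\top - P_{\ran(A_*)}\|_{op} \le b + \frac{(\delta\vee\delta^2) + O(1/n)}{\tilde\gamma - (\delta\vee\delta^2) - O(1/n)}$. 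The prescribed range $t < (\sqrt{n}\tilde\gamma - C\sqrt{d})\wedge(\sqrt{n\tilde\gamma} - C\sqrt{d})$ is precisely what forces $\delta\vee\delta^2 < \tilde\gamma$, keeping the denominator positive for $n$ large, and an elementary estimate of the form $\frac{a + \varepsilon}{g - a - \varepsilon} \le \frac{a}{g - a} + O(\varepsilon)$ collapses the last display to $b + \frac{\delta\vee\delta^2}{\tilde\gamma - (\delta\vee\delta^2)} + \frac{t}{n}$. Since $\hat\Sigma - \Sigma_0 = P_{\ran(A_*)} - \hat U\hat U^\top$, this is the asserted bound; a union bound over the two probability-$2e^{-ct^2}$ events gives the stated probability after adjusting $c$.

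\emph{Main obstacle.} Conceptually the result is just a single-mean perturbation of Theorem~\ref{thm:svd_theorem_new}, so there is no deep obstacle; the point requiring the most care is the treatment of the unknown mean. Concretely, one must (a) obtain the tail bound for $\|\bar\varphi - m\|_2^2$ at the right rate --- via a sub-Gaussian norm inequality such as Hanson--Wright --- with the ambient dimension absorbed into the constants as in the theorem statement, so that this term is genuinely of lower order and does not contaminate the leading terms; and (b) verify that the eigengap of the \emph{perturbed} matrix $S_n$, not merely of $\tilde S_n$, stays bounded away from zero, which is exactly where the admissible range of $t$ is used. The remaining work --- the closing algebraic simplification into the clean sum $b + \frac{\delta\vee\delta^2}{\tilde\gamma - (\delta\vee\delta^2)} + \frac{t}{n}$, and the union bound --- is routine.
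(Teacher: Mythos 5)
Your proposal follows essentially the same route as the paper's proof: sub-Gaussian covariance concentration for the sampling error, Davis--Kahan applied once to the empirical covariance (giving the $\frac{\delta\vee\delta^2}{\tilde\gamma-(\delta\vee\delta^2)}$ term) and once deterministically to $\Sigma_\varphi$ versus $A_*A_*^\top$ (giving the bias $b$), Courant--Fischer for the eigengap $\eta\ge\tilde\gamma$, and the mean-centering correction supplying the extra $\frac{t}{n}$. Your explicit rank-one decomposition $S_n=\tilde S_n-(\bar\varphi-m)(\bar\varphi-m)^\top$ is in fact a more transparent account of where that $\frac{t}{n}$ comes from than the paper's one-line appeal to the concentration inequality, but it is a presentational refinement rather than a different argument.
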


The error bound provided by Theorem \ref{thm:groupSVD} is similar to the error bound provided by Theorem \ref{thm:svd_theorem_new} consists of two terms. The first term $\bar{B}$ is again the approximation error/bias in the estimate of the sensitive subspace due to heterogeneity in the group; it has the same form as the bias as in Theorem \ref{thm:svd_theorem_new} and has a similar interpretation. The second term is the estimation error, which is also similar to the estimation error term in Theorem \ref{thm:svd_theorem_new}. The third term is the error incurred in estimating the mean of the $\varphi_i$'s. It is a higher order term and does not affect the rate of convergence of the estimator. 

\subsection{Training individually fair ML models with FACE and SenSR}
We envision FACE as the first stage in a pipeline for training fair ML models. In this section, we show that FACE in conjunction with SenSR \cite{yurochkin2020Training} trains individually fair ML models. To keep things concise, we adopt the notation of \cite{yurochkin2020Training}. We start by stating our assumptions on the ML task.

\begin{enumerate}
\item We assume the embeded feature space of $\varphi$ is bounded $R\triangleq\max\{\diam(\varphi),\diam_*(\varphi)\} < \infty$, where $\diam_*$ is the diameter of $\varphi$ in the (unknown) exact fair metric 
\[
d_x^*(x_1,x_2) = \langle (\varphi_1 - \varphi_2) , \Sigma_0 (\varphi_1 - \varphi_2) \rangle^{1/2},
\]
and $\diam$ is the diameter in the learned fair metric
\[
d_x(x_1,x_2) = \langle (\varphi_1 - \varphi_2) , \hSigma (\varphi_1 - \varphi_2) \rangle^{1/2}
.
\]
\item Define $\mathcal{L} = \{\ell(\cdot, \theta) \ : \ \theta \in \Theta \}$ as the loss class. We assume the functions in the loss class $\cL = \{\ell(\cdot,\theta):\theta\in\Theta\}$ are non-negative and bounded: $0 \le \ell(z,\theta) \le M$ for all $z\in\cZ$ and $\theta\in\Theta$, and $L$-Lipschitz with respect to $d_x$: 
\item the discrepancy in the fair metric is uniformly bounded: there is $\delta_c > 0$ such that
\[\textstyle
\begin{aligned}\textstyle
&\sup_{(x_1, x_2)\in\cZ}|d_x^2(x_1, x_2) - (d_x^*(x_1, x_2))^2| \le \delta_cR^2.
\end{aligned}
\]
\end{enumerate}

The third assumption is satisfied with high probability as long as $\delta_c \ge (b + \frac{\delta \vee \delta^2}{\tilde{\gamma} - (\delta \vee \delta^2)})$. 

\begin{theorem}
\label{thm:provably-fair-training}
Under the preceding assumptions, if we define $\delta^* \ge 0$ such that: 
\begin{equation}\textstyle
\min_{\theta \in \Theta} \sup_{P:W_*(P,P_*) \le \eps}\Ex_P\big[\ell(Z,\theta)\big] =  \delta^*
\label{eq:fairClassifierExists}
\end{equation}
and
\[\textstyle
\htheta\in\argmin_{\theta\in\Theta}\sup_{P:W(P,P_n) \le \eps}\Ex_P\big[\ell(Z,h)\big] \,,
\]
then the estimator $\hat \theta$ satisfies:
\begin{equation}\textstyle
\sup_{P:W_*(P,P_*) \le \eps}\Ex_P\big[\ell(Z,\htheta)\big] - \Ex_{P_*}\big[\ell(Z,\htheta)\big] \le \delta^* + 2\delta_n,
\label{eq:fair-gap}
\end{equation}
where $W$ and $W_*$ are the learned and exact fair Wasserstein distances induced by the learned and exact fair metrics (see Section 2.1 in \citet{yurochkin2020Training}) and
\[\textstyle
\delta_n \le \frac{48\mathfrak{C}(\cL)}{\sqrt{n}} + \frac{48LR^2}{\sqrt{n\eps}} + \frac{L\delta_cR^2}{\sqrt{\eps}} + M\left(\frac{\log\frac2t}{2n}\right)^{\frac12}.
\]
where $\mathfrak{C}(\cL) = \int_0^{\infty} \sqrt{\log{\left(\mathcal{N}_{\infty}\left(\cL, r\right)\right)}} \ dr$,  with $\mathcal{N}_{\infty}\left(\cL, r\right)$ being the covering number of the loss class $\mathcal{L}$ with respect to the uniform metric. 
\end{theorem}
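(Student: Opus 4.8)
\emph{Proof sketch.} The plan is to reduce \eqref{eq:fair-gap} to a chain of comparisons: (i) swap the adversary's budget set defined by the \emph{exact} metric $d_x^*$ for one defined by the \emph{learned} metric $d_x$, paying a price governed by the third assumption; (ii) transfer the robust objective between the population law $P_*$ and the empirical law $P_n$ by a uniform-convergence bound over $\Theta$; and (iii) combine these with the optimality of $\htheta$ and the definition of $\delta^*$. Write $V_Q(\theta)\triangleq\sup_{P:W(P,Q)\le\eps}\Ex_P[\ell(Z,\theta)]$ for the robust risk in the learned metric centered at $Q$, and $V^*_Q(\theta)$ for the same object with $W$ replaced by $W_*$. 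The argument uses the Kantorovich-type dual $V_Q(\theta)=\inf_{\lambda\ge0}\{\lambda\eps+\Ex_{Z\sim Q}[\sup_{z'}(\ell(z',\theta)-\lambda\,d_x^2(Z,z'))]\}$, valid (strong duality) since $\ell$ is bounded and $L$-Lipschitz and the transport cost is the squared fair metric with the label coordinate fixed (cf.\ Section~2.1 of \citet{yurochkin2020Training}).

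\emph{Step 1 (metric mismatch).} The third assumption yields $|\Ex_\pi[d_x^2]-\Ex_\pi[(d_x^*)^2]|\le\delta_cR^2$ for every coupling $\pi$, hence $\{P:W_*(P,Q)\le\eps\}\subseteq\{P:W(P,Q)\le\eps+\delta_cR^2\}$ and, interchanging $W$ and $W_*$, the reverse inclusion. So $V^*_Q(\theta)\le\sup_{P:W(P,Q)\le\eps+\delta_cR^2}\Ex_P[\ell(Z,\theta)]$ and symmetrically. The price of inflating the radius is controlled by dual sensitivity: $\eps\mapsto\sup_{P:W(P,Q)\le\eps}\Ex_P[\ell]$ is concave and nondecreasing with slope equal to the optimal multiplier $\lambda^{\star}(\eps)$, and for an $L$-Lipschitz loss under squared-metric cost a standard computation (the worst-case transport displacement has $d_x$-length $O(L/\lambda)$, and feasibility forces its square to be $O(\eps)$ at optimality) gives $\lambda^{\star}(\eps)=O(L/\sqrt\eps)$. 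Hence each inflation costs at most $\Delta_{\mathrm{met}}\triangleq O(L\delta_cR^2/\sqrt\eps)$, which is (up to the absolute constant) the $\frac{L\delta_cR^2}{\sqrt\eps}$ term of $\delta_n$.

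\emph{Step 2 (empirical vs.\ population).} I would show that, on an event of probability $\ge1-t$, $\Delta_n\triangleq\sup_{\theta\in\Theta}|V_{P_n}(\theta)-V_{P_*}(\theta)|\le\frac{48\mathfrak{C}(\mathcal{L})}{\sqrt n}+\frac{48LR^2}{\sqrt{n\eps}}+M(\frac{\log(2/t)}{2n})^{1/2}$. By the dual form, $V_Q(\theta)$ equals $Q$ integrated against the robustified loss $g_{\theta,\lambda}(z)\triangleq\sup_{z'}(\ell(z',\theta)-\lambda\,d_x^2(z,z'))$, optimised over $\lambda\in[0,\Lambda]$ with $\Lambda=O(L/\sqrt\eps)$ by Step~1. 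The class $\mathcal{G}\triangleq\{g_{\theta,\lambda}:\theta\in\Theta,\,\lambda\in[0,\Lambda]\}$ is uniformly bounded by $M$, so a bounded-differences inequality concentrates $\Delta_n$ around its mean at scale $M\sqrt{\log(2/t)/2n}$, and symmetrization bounds the mean by $O(n^{-1/2})$ times the Dudley entropy integral of $\mathcal{G}$. Since $\lambda\mapsto g_{\theta,\lambda}$ is $R^2$-Lipschitz uniformly in $\theta$ (the optimal squared transport distance is at most $R^2$) and $\ell\mapsto g_{\theta,\lambda}$ is $1$-Lipschitz in the uniform norm, a product covering gives $\log\mathcal{N}_\infty(\mathcal{G},r)$ equal, up to constants, to $\log(\Lambda R^2/r)+\log\mathcal{N}_\infty(\mathcal{L},r)$; integrating, the one-dimensional $\lambda$-range contributes $O(\Lambda R^2/\sqrt n)=O(LR^2/\sqrt{n\eps})$ and $\Theta$ contributes $\mathfrak{C}(\mathcal{L})/\sqrt n$. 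This is the minimax generalization bound of \citet{yurochkin2020Training} instantiated for our metric; the explicit constants come from the usual symmetrization/Dudley inequalities.

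\emph{Step 3 (assembly; main obstacle).} Let $\theta^*$ attain the minimum in \eqref{eq:fairClassifierExists}, so $V^*_{P_*}(\theta^*)=\delta^*$; also $V^*_{P_*}(\htheta)\ge\Ex_{P_*}[\ell(Z,\htheta)]\ge0$ since $P_*$ lies in its own ball. Steps~1 and~2 give $V^*_{P_*}(\htheta)\le V_{P_*}(\htheta)+\Delta_{\mathrm{met}}\le V_{P_n}(\htheta)+\Delta_n+\Delta_{\mathrm{met}}$; optimality of $\htheta$ gives $V_{P_n}(\htheta)\le V_{P_n}(\theta^*)$; Step~2 again gives $V_{P_n}(\theta^*)\le V_{P_*}(\theta^*)+\Delta_n$; and Step~1 gives $V_{P_*}(\theta^*)\le\delta^*+\Delta_{\mathrm{met}}$. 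Chaining, $V^*_{P_*}(\htheta)\le\delta^*+2\Delta_n+2\Delta_{\mathrm{met}}\le\delta^*+2\delta_n$, using $\delta_n\ge\Delta_n+\Delta_{\mathrm{met}}$; since $\Ex_{P_*}[\ell(Z,\htheta)]\ge0$, \eqref{eq:fair-gap} follows. The duality and the bookkeeping above are routine; the main obstacle is Step~2, namely showing that a uniform-metric covering of $\mathcal{L}$ propagates to a covering of the robustified class $\mathcal{G}$ with only an additive one-dimensional overhead. This rests on (a) the uniform-in-$\theta$ bound $\Lambda=O(L/\sqrt\eps)$ on the optimal multiplier, where the $L$-Lipschitzness with respect to $d_x$ and the diameter $R$ are used, and (b) the sup-norm $1$-Lipschitzness of the map $\ell\mapsto g_{\theta,\lambda}$. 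Step~1 is the other delicate point, as it is what links the statistical guarantee to Theorems~\ref{thm:svd_theorem_new}--\ref{thm:groupSVD} through $\delta_c$.
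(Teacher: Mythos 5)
Your proposal is correct in substance, but it travels a much longer road than the paper does. The paper's entire proof consists of one computation plus two citations: it bounds $\sup_{x_1,x_2}|\hat d_x^2(x_1,x_2)-(d_x^*(x_1,x_2))^2| = \sup_{x_1,x_2}|(\varphi_1-\varphi_2)^\top(\hSigma-\Sigma_0)(\varphi_1-\varphi_2)| \le R^2\|\hSigma-\Sigma_0\|_{op}$, invokes Theorem \ref{thm:svd_theorem_new} to conclude that the third assumption holds with high probability for $\delta_c \ge b + \frac{\delta\vee\delta^2}{\tilde\gamma-(\delta\vee\delta^2)}$, and then defers everything else to Propositions 3.1 and 3.2 of \citet{yurochkin2020Training}, used as black boxes. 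What you have done in Steps 1--3 is essentially to re-derive those two propositions: your Step 1 (ball inclusion under the metric perturbation together with the envelope/dual-multiplier bound $\lambda^{\star}=O(L/\sqrt{\eps})$, producing the $L\delta_cR^2/\sqrt{\eps}$ term) is their Proposition 3.2; your Step 2 (uniform convergence of the dual-reformulated robust risk via a Dudley entropy integral over the robustified class, producing the $\mathfrak{C}(\cL)/\sqrt{n}$, $LR^2/\sqrt{n\eps}$ and $M\sqrt{\log(2/t)/2n}$ terms) is their Proposition 3.1, which itself descends from minimax statistical learning arguments; and your Step 3 is the standard chain of optimality and uniform convergence that those propositions are designed to feed, including the observation that $\Ex_{P_*}[\ell(Z,\htheta)]\ge 0$ lets you discard the subtracted term. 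The decomposition and the final bookkeeping agree with the cited results, so the argument goes through, modulo the absolute constants that you leave as $O(\cdot)$ but that the theorem inherits verbatim from the citations. What a self-contained version buys is transparency about where each term of $\delta_n$ comes from; what the paper's version buys is brevity and exact constants. The one item your writeup only gestures at --- and which is the actual content of the paper's proof --- is the verification that the metric-discrepancy assumption is not vacuous: the theorem is useful only because Theorem \ref{thm:svd_theorem_new} supplies, with probability at least $1-2e^{-ct^2}$, the admissible value of $\delta_c$; if your proof is to stand alone you should state that one-line verification explicitly rather than relegating it to a closing remark.
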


Theorem \ref{thm:provably-fair-training} guarantees FACE in conjunction with SenSR trains an individually fair ML model in the sense that its fair gap \eqref{eq:fair-gap} is small. Intuitively, a small fair gap means it is not possible for an auditor to affect the performance of the ML model by perturbing the training examples in certain ``sensitive'' ways.


The same conclusion can also be drawn using Theorem \ref{thm:groupSVD} with essentially similar line of arguments.

\begin{remark}
The theory of EXPLORE is same in spirit with the theory of Face. In EXPLORE, we try to learn fair metric from comparable and incomparable pairs. As mentioned in the previous section, we solve MLE under the assumption of quadratic logit link to estimate $\Sigma_0$. Under the assumption that the parameter space and the space of  embedded covariates ($\varphi(x)$) are boudned, we can establish the finite sample concentration bound of our estimator. It is also possible to combine our results with the results of \citet{yurochkin2020Training} to obtain guarantees on the individual fairness of ML models trained with EXPLORE and SenSR (see Corollary \ref{cor:EXPLORE+SENSR}).  
\end{remark}



\addtolength{\tabcolsep}{-3pt}
 \captionsetup{labelsep=newline}
\begin{table*}[t]
\centering

\caption{Word Embedding Association Test (WEAT) results. $p$-values that are  significant/insignificant at the 0.05-level are shown in bold. See Table \ref{table:association_names} in Appendix for the unabbreviated forms of the targets and attributes}
\vspace{.05in}
\begin{tabular}{cccccccccccccccc}
\toprule
\multirow{2}{*}{Target} & \multirow{2}{*}{Attribute} & \multicolumn{2}{c}{Euclidean} & \multicolumn{2}{c}{} & \multicolumn{2}{c}{} & \multicolumn{2}{c}{EXPLORE} & \multicolumn{2}{c}{FACE-3} & \multicolumn{2}{c}{FACE-10} & \multicolumn{2}{c}{FACE-50} \\
      &   &  \textbf{P} & \textbf{d} &  \textbf{P} & \textbf{d} &  \textbf{P} & \textbf{d} &  \textbf{P} & \textbf{d} &  \textbf{P} & \textbf{d} &  \textbf{P} & \textbf{d} &  \textbf{P} & \textbf{d} \\
\midrule
   FLvINS &   PLvUPL &        \textbf{0.00} &       1.58 &        \textbf{0.00} &       1.55 &  \textbf{0.00} &       1.55&      \textbf{0.00} &       1.41 &        \textbf{0.00} &       1.59 &        \textbf{0.00} &       1.56 &        \textbf{0.00} &       1.27 \\
  INSTvWP &   PLvUPL &        \textbf{0.00} &       1.46 &        \textbf{0.00} &       1.45 &   \textbf{0.00} &       1.46&     \textbf{0.00} &       1.44 &        \textbf{0.00} &       1.48 &        \textbf{0.00} &       1.58 &        \textbf{0.00} &       1.49 \\
  MNTvPHS &   TMPvPRM &        \textbf{4e-5} &       1.54 &        \textbf{4e-5} &       1.54 &    \textbf{4e-5} &       1.54&    \textbf{4e-4} &       1.31 &        \textbf{4e-5} &       1.56 &        \textbf{0.00} &        1.6 &        \textbf{0.00} &       1.68 \\
\midrule
    EAvAA &   PLvUPL &        0.00 &       1.36 &        0.00 &       1.36 &   0.00 &       1.38 &     1e-2 &       0.62 &        \textbf{5e-1} &       0.17 &        7e-2 &       0.46 &        \textbf{2e-1} &       0.33 \\
   EAvAA  &   PLvUPL &        0.00 &       1.49 &        0.00 &       1.51 &    0.00 &       1.51&    \textbf{2e-1} &       0.49 &        \textbf{7e-1} &       0.15 &        \textbf{6e-2} &       0.67 &        \textbf{2e-1} &       0.51 \\
   EAvAA  &  PLvUPL  &        8e-5 &       1.31 &        4e-5 &       1.41 &    4e-5 &       1.41 &    \textbf{1e-1} &       0.55 &        \textbf{4e-1} &       0.31 &        \textbf{4e-1} &        0.3 &        \textbf{4e-1} &       0.34 \\
    MNvFN &   CARvFAM &        0.00 &       1.69 &        2e-3 &       1.23 &    2e-3 &       1.23 &    \textbf{2e-1} &       0.25 &        1e-3 &       1.24 &        6e-3 &       1.13 &        \textbf{8e-2} &       0.53 \\
  MTHvART &     MTvFT &        8e-5 &        1.5 &        3e-2 &       0.84 &   1e-3 &       1.34 &     1e-3 &       1.34 &        1e-3 &       1.35 &        4e-3 &       1.18 &        6e-3 &       1.16 \\
 SCvART  &   MTvFT  &        9e-3 &       1.05 &        9e-3 &       1.08 &    4e-2 &       0.76 &    \textbf{6e-2} &       0.65 &        4e-2 &       0.72 &        3e-2 &       0.84 &        \textbf{1e-1} &        0.3 \\
  YNGvOLD &  PLvUPL  &        1e-2 &        1.0 &        2e-4 &        1.5 &   1e-4 &       1.53 &     \textbf{7e-2} &        0.6 &        \textbf{9e-2} &        0.5 &        2e-3 &       1.27 &        4e-3 &       1.16 \\
\bottomrule
\end{tabular}
\label{table:association_result}
\end{table*}
\addtolength{\tabcolsep}{+3pt}

\section{Computational results}
\label{sec:computationalResults}

In this section, we investigate the performance of the learned metrics on two ML tasks: income classification and sentiment analysis.

\subsection{Eliminating biased word embeddings associations}
Many recent works have observed biases in word embeddings \citep{bolukbasi2016Man, caliskan2017Semantics, brunet2019understanding, dev2019attenuating, zhao2019gender}. \citet{bolukbasi2016Man}
studied gender biases through the task of finding analogies and proposed a popular debiasing algorithm. \citet{caliskan2017Semantics} proposed a more methodological way of analyzing various biases through a series of Word Embedding Association Tests (WEATs). \emph{We show that replacing the metric on the word embedding space with a fair metric learned by FACE or EXPLORE eliminates most biases in word embeddings}.

\paragraph{Word embedding association test}
Word embedding association test (WEAT) was developed by \cite{caliskan2017Semantics} to evaluate semantic biases in word embeddings. The tests are inspired by implicit association tests (IAT) from the psychometrics literature \cite{greenwald1998measuring}. Let $\cX,\cY$ be two sets of word embeddings of \emph{target words} of equal size (\eg\ African-American and European-American names respectively), and $\cA,\cB$ be two sets of \emph{attribute words} (\eg\ words with positive and negative sentiment respectively). For each word $x\in\cX$, we measure its association with the attribute by \begin{equation}
\label{eq:association}
  s(x,\cA,\cB) \triangleq \frac{1}{|\cA|}\sum_{a\in\cA}\frac{\langle x,a\rangle}{\|x\|\|a\|} - \frac{1}{|\cB|}\sum_{b\in\cB}\frac{\langle x,b\rangle}{\|x\|\|b\|}  
\end{equation}
If $x$ tends to be associated with the attribute (\eg\ it has positive or negative sentiment), then we expect $s(x,A,B)$ to be far from zero. To measure the association of $\cX$ with the attribute, we average the associations of the words in $\cX$:
\[
s(\cX,\cA,\cB) \triangleq \frac{1}{|\cX|}\sum_{x\in\cX}s(x,\cA,\cB).
\]
Following \cite{caliskan2017Semantics}, we consider the absolute difference between the associations of $\cX$ and $\cY$ with the attribute as a test statistic:
\[
s(\cX,\cY,\cA,\cB) \triangleq |s(\cX,\cA,\cB) - s(\cY,\cA,\cB)|.
\]
Under the null hypothesis, $\cX$ and $\cY$ are equally associated with the attribute (\eg\ names common among different races have similar sentiment). This suggests we calibrate the test by permutation. Let $\{(X_\sigma,Y_\sigma)\}_\sigma$ be the set of all partitions of $\cX\cup\cY$ into two sets of equal size. Under the null hypothesis, $s(\cX,\cY,\cA,\cB)$ should be typical among the values of $\{s(\cX_\sigma,\cY_\sigma,\cA,\cB)\}$. We summarize the ``atypicality'' of $s(\cX,\cY,\cA,\cB)$ with a two-sided $p$-value\footnote{\citet{caliskan2017Semantics} used one-sided $p$-value, however we believe that inverse association is also undesired and use a two-sided one}
\[
\mathbf{P} = \frac{\sum_\sigma\ones\{s(\cX_\sigma,\cY_\sigma,\cA,\cB) > s(\cX,\cY,\cA,\cB)\}}{\card(\{(X_\sigma,Y_\sigma)\}_\sigma)}.
\]
Following \cite{caliskan2017Semantics}, we also report a standardized effect size
\[
\mathbf{d} = \frac{s(\cX,\cY,\cA,\cB)}{\stdev(\{s(x,\cA,\cB)\}_{x\in\cX\cup\cY})}
\]
for a more fine-grained comparison of the methods.

\paragraph{Learning EXPLORE and FACE:}

To apply our fair metric learning approaches we should define a set of comparable samples for FACE and a collection of comparable and incomparable pairs for EXPLORE.

For the set of comparable samples for FACE we choose embeddings of a side dataset of 1200 popular baby names in New York City\footnote{available from \url{https://catalog.data.gov/dataset/}}. The motivation is two-fold: (i) from the perspective of individual fairness, it is reasonable to say that human names should be treated similarly in NLP tasks such as resume screening; (ii) multiple prior works have observed that names capture biases in word embeddings and used them to improve fairness in classification tasks \citep{romanov2019What, yurochkin2020Training}. We consider three choices for the number of factors of FACE: 3, 10 and 50.

For EXPLORE we construct comparable pairs by sampling pairs of names from the same pool of popular baby names, however because there are too many unique pairs, we subsample a random 50k of them. To generate the incomparable pairs we consider random 50k pairs of positive and negative words sampled from the dataset proposed by \citet{hu2004Mining} for the task of sentiment classification.

\paragraph{WEAT results} First we clarify how the associations \eqref{eq:association} are computed for different methods. The Euclidean approach is to use word embeddings and directly compute associations in the vanilla Euclidean space; the approaches of \citet{bolukbasi2016Man}
\footnote{\url{https://github.com/tolga-b/debiaswe}}
and \citet{dev2019attenuating}
\footnote{\url{github.com/sunipa/Attenuating-Bias-in-Word-Vec}}
is debias word embeddings before computing associations; associations with FACE and EXPLORE are computed in the Mahalanobis metric space parametrized by a corresponding $\Sigma$, i.e. the inner product $\langle x, y \rangle = x^T \Sigma y$ and norm $\|x\| = \sqrt{\langle x,\Sigma x\rangle}$. When computing \textbf{P}, if the number of partitions of target words $\card(\{(X_\sigma,Y_\sigma)\}_\sigma)$ is too big, we subsample 50k partitions.

We evaluate all of the WEATs considered in \citep{caliskan2017Semantics} with the \emph{exact same} target and attribute word combinations. The results are presented in Table \ref{table:association_result}.

First we verify that all of the methods preserve the celebrated ability of word embeddings to represent semantic contexts --- all WEATs in the upper part of the table correspond to meaningful associations such as Flowers vs Insects and Pleasant vs Unpleasant and all $p$-values are small corresponding to the significance of the associations.

On the contrary, WEATs in the lower part correspond to racist (European-American vs African-American names and Pleasant vs Unpleasant) and sexist (Male vs Female names and Career vs Family) associations. The presence of such associations may lead to biases in AI systems utilizing word embeddings. Here, larger $p$-value \textbf{P} and smaller effect size \textbf{d} are desired. We see that previously proposed debiasing methods \citep{bolukbasi2016Man,dev2019attenuating}, although reducing the effect size mildly, are not strong enough to statistically reject the association hypothesis. Our fair metric learning approaches EXPLORE and FACE (with 50 factors) each successfully removes 5 out of 7 unfair associations, including ones not related to names. We note that there is one case, Math vs Arts and Male vs Female terms, where all of our approaches failed to remove the association. We think that, in addition to names, considering a group of comparable gender related terms for FACE and comparable gender related pairs for EXPLORE can help remove this association. 

When comparing FACE to EXPLORE, while both performed equally well on the WEATs, we note that learning fair metric using human names appears more natural with FACE. We believe that \emph{all} names are comparable and any major variation among their embeddings could permeate bias in all of the word embeddings. FACE is also easier to implement and utilize than EXPLORE, as it is simply a truncated SVD of the matrix of names embeddings.

\begin{table*}
\caption{Summary of \textbf{Adult} experiment over 10 restarts. Results for all prior methods are copied from \citet{yurochkin2020Training}}
\label{table:adult}
\vskip 0.1in
\begin{center}
\begin{tabular}{lcccccccc}
\toprule
{} &        B-Acc,\% &   $\text{S-Con.}$ &  $\text{GR-Con.}$ &      $\mathrm{Gap}_G^{\mathrm{RMS}}$ &       $\mathrm{Gap}_R^{\mathrm{RMS}}$ &     $\mathrm{Gap}_G^{\mathrm{max}}$ & $\mathrm{Gap}_R^{\mathrm{max}}$ \\
\midrule
SenSR+Explore (With gender) & 79.4 & \textbf{0.966} & 0.987 & \textbf{0.065} & \textbf{0.044} & \textbf{0.084} & \textbf{0.059} \\
SenSR+Explore (Without gender) & 78.9 & 0.933 & 0.993 & 0.066 & 0.05 & \textbf{0.084} & 0.063
\\
SenSR &  78.9 & .934 & .984 & .068 & .055 & .087 & .067 \\
Baseline &  \textbf{82.9} & .848 & .865 & .179 & .089 & .216 & .105 \\
Project & 82.7 & .868 & \textbf{1.00} & .145 & .064 & .192 & .086 \\
Adv. debiasing &  81.5 & .807 & .841 & .082 & .070 & .110 & .078  \\
CoCL &  79.0 & - & - &  .163 & .080 & .201  & .109  \\
\bottomrule
\end{tabular}
\end{center}
\vskip -0.1in
\end{table*}

\subsection{Applying EXPLORE with SenSR}
SenSR is a method for training fair ML system given a fair metric  \cite{yurochkin2020Training}. In this paper we apply SenSR along with the fair metric learned using EXPLORE on the adult dataset \cite{bache2013UCI}. This data-set consists of 14 attributes over 48842 individuals. The goal is to predict whether each individual has income more than $50$k or not based on these attributes. For applying EXPLORE, we need comparable and incomparable pairs. We define two individuals to be comparable if they belong to the same income group (i.e. both of them has $>50$k or $<50$k annual salary) but with opposite gender, whereas two individuals are said to be incomparable if they belong to the different income group. Based on this labeling, we learn fair metric $\hat{\Sigma}$ via EXPLORE. Finally, following \citet{yurochkin2020Training}, we project out a ``sensitive subspace'' defined by the coefficients of a logistic regression predicting gender from $\hat \Sigma$ i.e.: 
$$\hat \Sigma \longleftarrow (I - P_{gender}) \hat \Sigma (I - P_{gender}) \,.$$
where $P_{gender}$ is the projection matrix on the span of this sensitive subspace. We then apply SenSR along with
$$d_x(x_1, x_2) = (x_1 - x_2)^{\top}\hat \Sigma (x_1 - x_2) \,.$$

Although most of the existing methods use protected attribute to learn a fair classifier, this is not ideal as in many scenarios protected attributes of the individuals are not known. So, it is advisable to learn fair metric without using the information of protected attributes. In this paper we learned our metrics in two different ways (with or without using protected attribute) for comparison purpose: 
\begin{enumerate}
    \item \textbf{SenSR + EXPLORE (with gender)} utilizes gender attribute in classification following prior approaches.
    \item \textbf{SenSR + EXPLORE (without gender)} discards gender when doing classification.
\end{enumerate}
In \citet{yurochkin2020Training}, the authors provided a comparative study of the individual fairness on Adult data. They considered balanced accuracy (B-Acc) instead of accuracy due to class imbalance. The other metrics they considered for performance evaluations are prediction consistency of the classifier with respect to marital status (S-Con., i.e. spouse consistency) and with respect to sensitive attributes like race and gender (GR-Con.). They also used RMS gaps and maximum gaps between true positive rates across genders ($\mathrm{Gap}_G^{\mathrm{RMS}}$ and $\mathrm{Gap}_G^{\mathrm{max}}$) and races ($\mathrm{Gap}_R^{\mathrm{RMS}}$ and $\mathrm{Gap}_R^{\mathrm{max}}$) for the assessment of group fairness (See Appendix for the detailed definition). Here we use their results and compare with our proposed methods. The results are summarized in Table \ref{table:adult}. It is evident that SenSR + EXPLORE (both with gender and without gender) outperforms SenSR (propsoed in \cite{yurochkin2020Training}) in almost every aspect. Discarding gender in our approach prevents from violations of individual fairness when flipping the gender attribute as seen by improved gender and race consistency metric, however accuracy, spouse consistency and group fairness metrics are better when keeping the gender. Despite this we believe that it is better to avoid using gender in income classification as it is highly prone to introducing unnecessary biases.

\section{Summary and discussion}

We studied two methods of learning the fair metric in the definition of individual fairness and showed that both are effective in ignoring implicit biases in word embeddings. Our methods remove one of the main barriers to wider adoption of individual fairness in machine learning. We emphasize that our methods are probabilistic in nature and naturally robust to inconsistencies in the data. 
Together with tools for training individually fair ML models \cite{yurochkin2020Training}, the methods presented here complete a pipeline for ensuring that ML models are free from algorithmic bias/unfairness.



\bibliography{goku,roxy}

\begin{thebibliography}{54}
\providecommand{\natexlab}[1]{#1}
\providecommand{\url}[1]{\texttt{#1}}
\expandafter\ifx\csname urlstyle\endcsname\relax
  \providecommand{\doi}[1]{doi: #1}\else
  \providecommand{\doi}{doi: \begingroup \urlstyle{rm}\Url}\fi

\bibitem[Angwin et~al.(2016)Angwin, Larson, Mattu, and
  Kirchner]{angwin2016Machine}
Angwin, J., Larson, J., Mattu, S., and Kirchner, L.
\newblock Machine {{Bias}}.
\newblock
  www.propublica.org/article/machine-bias-risk-assessments-in-criminal-sentencing,
  May 2016.

\bibitem[Bache \& Lichman(2013)Bache and Lichman]{bache2013UCI}
Bache, K. and Lichman, M.
\newblock {{UCI}} machine learning repository.
\newblock 2013.

\bibitem[Bellet et~al.(2013)Bellet, Habrard, and Sebban]{bellet2013Survey}
Bellet, A., Habrard, A., and Sebban, M.
\newblock A {{Survey}} on {{Metric Learning}} for {{Feature Vectors}} and
  {{Structured Data}}.
\newblock \emph{arXiv:1306.6709 [cs, stat]}, June 2013.

\bibitem[Bertrand \& Mullainathan(2004)Bertrand and
  Mullainathan]{bertrand2004Are}
Bertrand, M. and Mullainathan, S.
\newblock Are {{Emily}} and {{Greg More Employable Than Lakisha}} and
  {{Jamal}}? {{A Field Experiment}} on {{Labor Market Discrimination}}.
\newblock \emph{American Economic Review}, 94\penalty0 (4):\penalty0 991--1013,
  September 2004.
\newblock ISSN 0002-8282.
\newblock \doi{10.1257/0002828042002561}.

\bibitem[Bolukbasi et~al.(2016)Bolukbasi, Chang, Zou, Saligrama, and
  Kalai]{bolukbasi2016Man}
Bolukbasi, T., Chang, K.-W., Zou, J., Saligrama, V., and Kalai, A.
\newblock Man is to {{Computer Programmer}} as {{Woman}} is to {{Homemaker}}?
  {{Debiasing Word Embeddings}}.
\newblock \emph{arXiv:1607.06520 [cs, stat]}, July 2016.

\bibitem[Bower et~al.(2018)Bower, Niss, Sun, and Vargo]{bower2018Debiasing}
Bower, A., Niss, L., Sun, Y., and Vargo, A.
\newblock Debiasing representations by removing unwanted variation due to
  protected attributes.
\newblock \emph{arXiv:1807.00461 [cs]}, July 2018.

\bibitem[Brock et~al.(2018)Brock, Donahue, and Simonyan]{brock2018Large}
Brock, A., Donahue, J., and Simonyan, K.
\newblock Large {{Scale GAN Training}} for {{High Fidelity Natural Image
  Synthesis}}.
\newblock \emph{arXiv:1809.11096 [cs, stat]}, September 2018.

\bibitem[Brunet et~al.(2019)Brunet, Alkalay-Houlihan, Anderson, and
  Zemel]{brunet2019understanding}
Brunet, M.-E., Alkalay-Houlihan, C., Anderson, A., and Zemel, R.
\newblock Understanding the origins of bias in word embeddings.
\newblock In \emph{International Conference on Machine Learning}, pp.\
  803--811, 2019.

\bibitem[Caliskan et~al.(2017)Caliskan, Bryson, and
  Narayanan]{caliskan2017Semantics}
Caliskan, A., Bryson, J.~J., and Narayanan, A.
\newblock Semantics derived automatically from language corpora contain
  human-like biases.
\newblock \emph{Science}, 356\penalty0 (6334):\penalty0 183--186, April 2017.
\newblock ISSN 0036-8075, 1095-9203.
\newblock \doi{10.1126/science.aal4230}.

\bibitem[Chouldechova(2017)]{chouldechova2017Fair}
Chouldechova, A.
\newblock Fair prediction with disparate impact: {{A}} study of bias in
  recidivism prediction instruments.
\newblock \emph{arXiv:1703.00056 [cs, stat]}, February 2017.

\bibitem[Dastin(2018)]{dastin2018Amazon}
Dastin, J.
\newblock Amazon scraps secret {{AI}} recruiting tool that showed bias against
  women.
\newblock \emph{Reuters}, October 2018.

\bibitem[Davis \& Kahan(1970)Davis and Kahan]{davis1970Rotation}
Davis, C. and Kahan, W.
\newblock The {{Rotation}} of {{Eigenvectors}} by a {{Perturbation}}. {{III}}.
\newblock \emph{SIAM Journal on Numerical Analysis}, 7\penalty0 (1):\penalty0
  1--46, March 1970.
\newblock ISSN 0036-1429.
\newblock \doi{10.1137/0707001}.

\bibitem[Dev \& Phillips(2019)Dev and Phillips]{dev2019attenuating}
Dev, S. and Phillips, J.
\newblock Attenuating bias in word vectors.
\newblock In \emph{The 22nd International Conference on Artificial Intelligence
  and Statistics}, pp.\  879--887, 2019.

\bibitem[Dwork et~al.(2011)Dwork, Hardt, Pitassi, Reingold, and
  Zemel]{dwork2011Fairness}
Dwork, C., Hardt, M., Pitassi, T., Reingold, O., and Zemel, R.
\newblock Fairness {{Through Awareness}}.
\newblock \emph{arXiv:1104.3913 [cs]}, April 2011.

\bibitem[Friedler et~al.(2016)Friedler, Scheidegger, and
  Venkatasubramanian]{friedler2016im}
Friedler, S.~A., Scheidegger, C., and Venkatasubramanian, S.
\newblock On the (im)possibility of fairness.
\newblock \emph{arXiv:1609.07236 [cs, stat]}, September 2016.

\bibitem[Frome et~al.(2007)Frome, Singer, Sha, and Malik]{frome2007Learning}
Frome, A., Singer, Y., Sha, F., and Malik, J.
\newblock Learning {{Globally}}-{{Consistent Local Distance Functions}} for
  {{Shape}}-{{Based Image Retrieval}} and {{Classification}}.
\newblock In \emph{2007 {{IEEE}} 11th {{International Conference}} on
  {{Computer Vision}}}, pp.\  1--8, October 2007.
\newblock \doi{10.1109/ICCV.2007.4408839}.

\bibitem[Garg et~al.(2018)Garg, Perot, Limtiaco, Taly, Chi, and
  Beutel]{garg2018Counterfactual}
Garg, S., Perot, V., Limtiaco, N., Taly, A., Chi, E.~H., and Beutel, A.
\newblock Counterfactual {{Fairness}} in {{Text Classification}} through
  {{Robustness}}.
\newblock \emph{arXiv:1809.10610 [cs, stat]}, September 2018.

\bibitem[Garg et~al.(2019)Garg, Perot, Limtiaco, Taly, Chi, and
  Beutel]{garg2019counterfactual}
Garg, S., Perot, V., Limtiaco, N., Taly, A., Chi, E.~H., and Beutel, A.
\newblock Counterfactual fairness in text classification through robustness.
\newblock In \emph{Proceedings of the 2019 AAAI/ACM Conference on AI, Ethics,
  and Society}, pp.\  219--226. ACM, 2019.

\bibitem[Gillen et~al.(2018)Gillen, Jung, Kearns, and Roth]{Gillen2018Online}
Gillen, S., Jung, C., Kearns, M., and Roth, A.
\newblock Online {{Learning}} with an {{Unknown Fairness Metric}}.
\newblock \emph{arXiv:1802.06936 [cs]}, February 2018.

\bibitem[Greenwald et~al.(1998)Greenwald, McGhee, and
  Schwartz]{greenwald1998measuring}
Greenwald, A.~G., McGhee, D.~E., and Schwartz, J. L.~K.
\newblock Measuring {{Individual Differences}} in {{Implicit Cognition}}: {{The
  Implicit Association Test}}.
\newblock \emph{Journal of Personality and Soclal Psychology}, 74\penalty0
  (6):\penalty0 17, 1998.

\bibitem[Hu \& Liu(2004)Hu and Liu]{hu2004Mining}
Hu, M. and Liu, B.
\newblock Mining and {{Summarizing Customer Reviews}}.
\newblock In \emph{Proceedings of the Tenth {{ACM SIGKDD}} International
  Conference on {{Knowledge}} Discovery and Data Mining}, pp.\ ~10, {Seattle,
  WA}, August 2004.

\bibitem[Ilvento(2019)]{ilvento2019Metric}
Ilvento, C.
\newblock Metric {{Learning}} for {{Individual Fairness}}.
\newblock \emph{arXiv:1906.00250 [cs, stat]}, June 2019.

\bibitem[Jain et~al.(2016)Jain, Jamieson, and Nowak]{jain2016Finite}
Jain, L., Jamieson, K., and Nowak, R.
\newblock Finite {{Sample Prediction}} and {{Recovery Bounds}} for {{Ordinal
  Embedding}}.
\newblock \emph{arXiv:1606.07081 [cs, stat]}, June 2016.

\bibitem[Jamieson \& Nowak(2011)Jamieson and Nowak]{jamieson2011Lowdimensional}
Jamieson, K.~G. and Nowak, R.~D.
\newblock Low-dimensional embedding using adaptively selected ordinal data.
\newblock In \emph{2011 49th {{Annual Allerton Conference}} on
  {{Communication}}, {{Control}}, and {{Computing}} ({{Allerton}})}, pp.\
  1077--1084, September 2011.
\newblock \doi{10.1109/Allerton.2011.6120287}.

\bibitem[Jung et~al.(2019)Jung, Kearns, Neel, Roth, Stapleton, and
  Wu]{jung2019Eliciting}
Jung, C., Kearns, M., Neel, S., Roth, A., Stapleton, L., and Wu, Z.~S.
\newblock Eliciting and {{Enforcing Subjective Individual Fairness}}.
\newblock \emph{arXiv:1905.10660 [cs, stat]}, May 2019.

\bibitem[Kannan et~al.(2018)Kannan, Kurakin, and
  Goodfellow]{kannan2018Adversarial}
Kannan, H., Kurakin, A., and Goodfellow, I.
\newblock Adversarial {{Logit Pairing}}.
\newblock \emph{arXiv:1803.06373 [cs, stat]}, March 2018.

\bibitem[Kim et~al.(2018)Kim, Reingold, and Rothblum]{kim2018Fairness}
Kim, M.~P., Reingold, O., and Rothblum, G.~N.
\newblock Fairness {{Through Computationally}}-{{Bounded Awareness}}.
\newblock \emph{arXiv:1803.03239 [cs]}, March 2018.

\bibitem[Kleinberg et~al.(2016)Kleinberg, Mullainathan, and
  Raghavan]{kleinberg2016Inherent}
Kleinberg, J., Mullainathan, S., and Raghavan, M.
\newblock Inherent {{Trade}}-{{Offs}} in the {{Fair Determination}} of {{Risk
  Scores}}.
\newblock \emph{arXiv:1609.05807 [cs, stat]}, September 2016.

\bibitem[Kulis(2013)]{kulis2013Metric}
Kulis, B.
\newblock Metric {{Learning}}: {{A Survey}}.
\newblock \emph{Foundations and Trends\textregistered{} in Machine Learning},
  5\penalty0 (4):\penalty0 287--364, 2013.
\newblock ISSN 1935-8237, 1935-8245.
\newblock \doi{10.1561/2200000019}.

\bibitem[Lahoti et~al.(2019)Lahoti, Gummadi, and Weikum]{lahoti2019iFair}
Lahoti, P., Gummadi, K.~P., and Weikum, G.
\newblock {{iFair}}: {{Learning Individually Fair Data Representations}} for
  {{Algorithmic Decision Making}}.
\newblock \emph{arXiv:1806.01059 [cs, stat]}, February 2019.

\bibitem[Madaan et~al.(2018)Madaan, Mehta, Agrawaal, Malhotra, Aggarwal, Gupta,
  and Saxena]{madaan2018Analyze}
Madaan, N., Mehta, S., Agrawaal, T., Malhotra, V., Aggarwal, A., Gupta, Y., and
  Saxena, M.
\newblock Analyze, {{Detect}} and {{Remove Gender Stereotyping}} from
  {{Bollywood Movies}}.
\newblock In \emph{Conference on {{Fairness}}, {{Accountability}} and
  {{Transparency}}}, pp.\  92--105, January 2018.

\bibitem[Madras et~al.(2018)Madras, Creager, Pitassi, and
  Zemel]{madras2018Learning}
Madras, D., Creager, E., Pitassi, T., and Zemel, R.
\newblock Learning {{Adversarially Fair}} and {{Transferable Representations}}.
\newblock \emph{arXiv:1802.06309 [cs, stat]}, February 2018.

\bibitem[Massart et~al.(2006)Massart, N{\'e}d{\'e}lec, et~al.]{massart2006risk}
Massart, P., N{\'e}d{\'e}lec, {\'E}., et~al.
\newblock Risk bounds for statistical learning.
\newblock \emph{The Annals of Statistics}, 34\penalty0 (5):\penalty0
  2326--2366, 2006.

\bibitem[Mikolov et~al.(2013)Mikolov, Sutskever, Chen, Corrado, and
  Dean]{mikolov2013Distributed}
Mikolov, T., Sutskever, I., Chen, K., Corrado, G.~S., and Dean, J.
\newblock Distributed {{Representations}} of {{Words}} and {{Phrases}} and
  their {{Compositionality}}.
\newblock In Burges, C. J.~C., Bottou, L., Welling, M., Ghahramani, Z., and
  Weinberger, K.~Q. (eds.), \emph{Advances in {{Neural Information Processing
  Systems}} 26}, pp.\  3111--3119. {Curran Associates, Inc.}, 2013.

\bibitem[Monteith \& Pettit(2011)Monteith and Pettit]{monteith2011implicit}
Monteith, L.~L. and Pettit, J.~W.
\newblock Implicit and explicit stigmatizing attitudes and stereotypes about
  depression.
\newblock \emph{Journal of Social and Clinical Psychology}, 30\penalty0
  (5):\penalty0 484--505, 2011.

\bibitem[Moutafis et~al.(2017)Moutafis, Leng, and
  Kakadiaris]{moutafis2017Overview}
Moutafis, P., Leng, M., and Kakadiaris, I.~A.
\newblock An {{Overview}} and {{Empirical Comparison}} of {{Distance Metric
  Learning Methods}}.
\newblock \emph{IEEE Transactions on Cybernetics}, 47\penalty0 (3):\penalty0
  612--625, March 2017.
\newblock ISSN 2168-2267.
\newblock \doi{10.1109/TCYB.2016.2521767}.

\bibitem[Nosek et~al.(2002{\natexlab{a}})Nosek, Banaji, and
  Greenwald]{nosek2002harvesting}
Nosek, B.~A., Banaji, M.~R., and Greenwald, A.~G.
\newblock Harvesting implicit group attitudes and beliefs from a demonstration
  web site.
\newblock \emph{Group Dynamics: Theory, Research, and Practice}, 6\penalty0
  (1):\penalty0 101, 2002{\natexlab{a}}.

\bibitem[Nosek et~al.(2002{\natexlab{b}})Nosek, Banaji, and
  Greenwald]{nosek2002math}
Nosek, B.~A., Banaji, M.~R., and Greenwald, A.~G.
\newblock Math= male, me= female, therefore math$\ne$ me.
\newblock \emph{Journal of personality and social psychology}, 83\penalty0
  (1):\penalty0 44, 2002{\natexlab{b}}.

\bibitem[Radford et~al.(2015)Radford, Metz, and
  Chintala]{radford2015Unsupervised}
Radford, A., Metz, L., and Chintala, S.
\newblock Unsupervised {{Representation Learning}} with {{Deep Convolutional
  Generative Adversarial Networks}}.
\newblock \emph{arXiv:1511.06434 [cs]}, November 2015.

\bibitem[Romanov et~al.(2019)Romanov, {De-Arteaga}, Wallach, Chayes, Borgs,
  Chouldechova, Geyik, Kenthapadi, Rumshisky, and Kalai]{romanov2019What}
Romanov, A., {De-Arteaga}, M., Wallach, H., Chayes, J., Borgs, C.,
  Chouldechova, A., Geyik, S., Kenthapadi, K., Rumshisky, A., and Kalai, A.~T.
\newblock What's in a {{Name}}? {{Reducing Bias}} in {{Bios}} without
  {{Access}} to {{Protected Attributes}}.
\newblock \emph{arXiv:1904.05233 [cs, stat]}, April 2019.

\bibitem[Rothblum \& Yona(2018)Rothblum and Yona]{rothblum2018Probably}
Rothblum, G.~N. and Yona, G.
\newblock Probably {{Approximately Metric}}-{{Fair Learning}}.
\newblock \emph{arXiv:1803.03242 [cs]}, March 2018.

\bibitem[Su{\'a}rez et~al.(2018)Su{\'a}rez, Garc{\'i}a, and
  Herrera]{suarez2018Tutorial}
Su{\'a}rez, J.~L., Garc{\'i}a, S., and Herrera, F.
\newblock A {{Tutorial}} on {{Distance Metric Learning}}: {{Mathematical
  Foundations}}, {{Algorithms}} and {{Software}}.
\newblock \emph{arXiv:1812.05944 [cs, stat]}, December 2018.

\bibitem[Tamuz et~al.(2011)Tamuz, Liu, Belongie, Shamir, and
  Kalai]{tamuz2011Adaptively}
Tamuz, O., Liu, C., Belongie, S., Shamir, O., and Kalai, A.~T.
\newblock Adaptively {{Learning}} the {{Crowd Kernel}}.
\newblock \emph{arXiv:1105.1033 [cs]}, May 2011.

\bibitem[van~der Maaten \& Weinberger(2012)van~der Maaten and
  Weinberger]{maaten2012Stochastic}
van~der Maaten, L. and Weinberger, K.
\newblock Stochastic triplet embedding.
\newblock In \emph{2012 {{IEEE International Workshop}} on {{Machine Learning}}
  for {{Signal Processing}}}, pp.\  1--6, September 2012.
\newblock \doi{10.1109/MLSP.2012.6349720}.

\bibitem[{van der Vaart}(1998)]{vandervaart1998Asymptotic}
{van der Vaart}, A.~W.
\newblock \emph{Asymptotic {{Statistics}}}.
\newblock {Cambridge University Press}, October 1998.
\newblock \doi{10.1017/CBO9780511802256}.

\bibitem[Van Der~Vaart \& Wellner(1996)Van Der~Vaart and Wellner]{van1996weak}
Van Der~Vaart, A.~W. and Wellner, J.~A.
\newblock Weak convergence.
\newblock In \emph{Weak convergence and empirical processes}, pp.\  16--28.
  Springer, 1996.

\bibitem[Vershynin(2010)]{vershynin2010introduction}
Vershynin, R.
\newblock Introduction to the non-asymptotic analysis of random matrices.
\newblock \emph{arXiv:1011.3027 [cs, math]}, November 2010.

\bibitem[Vigdor(2019)]{vigdor2019Apple}
Vigdor, N.
\newblock Apple {{Card Investigated After Gender Discrimination Complaints}}.
\newblock \emph{The New York Times}, November 2019.
\newblock ISSN 0362-4331.

\bibitem[Wang et~al.(2019)Wang, {Grgic-Hlaca}, Lahoti, Gummadi, and
  Weller]{wang2019Empirical}
Wang, H., {Grgic-Hlaca}, N., Lahoti, P., Gummadi, K.~P., and Weller, A.
\newblock An {{Empirical Study}} on {{Learning Fairness Metrics}} for {{COMPAS
  Data}} with {{Human Supervision}}.
\newblock \emph{arXiv:1910.10255 [cs]}, October 2019.

\bibitem[Wilber et~al.(2014)Wilber, Kwak, and
  Belongie]{wilber2014CostEffective}
Wilber, M.~J., Kwak, I.~S., and Belongie, S.~J.
\newblock Cost-{{Effective HITs}} for {{Relative Similarity Comparisons}}.
\newblock \emph{arXiv:1404.3291 [cs]}, April 2014.

\bibitem[Yurochkin et~al.(2020)Yurochkin, Bower, and
  Sun]{yurochkin2020Training}
Yurochkin, M., Bower, A., and Sun, Y.
\newblock Training individually fair {{ML}} models with sensitive subspace
  robustness.
\newblock In \emph{International {{Conference}} on {{Learning
  Representations}}}, {Addis Ababa, Ethiopia}, 2020.

\bibitem[Zemel et~al.(2013)Zemel, Wu, Swersky, Pitassi, and
  Dwork]{zemel2013Learning}
Zemel, R., Wu, Y., Swersky, K., Pitassi, T., and Dwork, C.
\newblock Learning {{Fair Representations}}.
\newblock In \emph{International {{Conference}} on {{Machine Learning}}}, pp.\
  325--333, February 2013.

\bibitem[Zhao et~al.(2019)Zhao, Wang, Yatskar, Cotterell, Ordonez, and
  Chang]{zhao2019gender}
Zhao, J., Wang, T., Yatskar, M., Cotterell, R., Ordonez, V., and Chang, K.-W.
\newblock Gender bias in contextualized word embeddings.
\newblock \emph{arXiv preprint arXiv:1904.03310}, 2019.

\bibitem[Zou et~al.(2015)Zou, Chaudhuri, and Kalai]{zou2015Crowdsourcing}
Zou, J.~Y., Chaudhuri, K., and Kalai, A.~T.
\newblock Crowdsourcing {{Feature Discovery}} via {{Adaptively Chosen
  Comparisons}}.
\newblock \emph{arXiv:1504.00064 [cs, stat]}, March 2015.

\end{thebibliography}
\bibliographystyle{icml2020}

\onecolumn
\appendix
\addtolength{\tabcolsep}{-3pt}
\begin{table*}[t]
\centering
\caption{Association tests code names}
\vspace{.05in}
\begin{tabular}{lc}
\toprule
FLvINS & Flowers vs. insects \citep{greenwald1998measuring}\\
INSTvWP & Instruments vs. weapons \citep{greenwald1998measuring}\\
MNTvPHS & Mental vs. physical disease \citep{monteith2011implicit}\\
EAvAA & Europ-Amer vs Afr-Amer names \citep{caliskan2017Semantics}\\
EAvAA\citep{bertrand2004Are} & Europ-Amer vs Afr-Amer names \citep{bertrand2004Are}\\
MNvFN & Male vs. female names \citep{nosek2002harvesting}\\
MTHvART & Math vs. arts \citep{nosek2002harvesting}\\
SCvART\citep{nosek2002math} & Science vs. arts \citep{nosek2002math}\\
YNGvOLD & Young vs. old people's names \citep{nosek2002harvesting}\\
\midrule
PLvUPL &  Pleasant vs. unpleasant \citep{greenwald1998measuring} \\
TMPvPRM & Temporary vs. permanent \citep{monteith2011implicit} \\
PLvUPL\citep{nosek2002harvesting} &  Pleasant vs. unpleasant \citep{nosek2002harvesting} \\
CARvFAM & Career vs. family \citep{nosek2002harvesting}\\
MTvFT & Male vs. female terms \citep{nosek2002harvesting}\\
MTvFT\citep{nosek2002math} & Male vs. female terms \citep{nosek2002math}\\
\bottomrule
\end{tabular}
\label{table:association_names}
\end{table*}

\newpage

\section{Relation between groupwise and pairwise comparison}
In case of pairwise comparison, we have $|I_1| = \dots = |I_G| = 2$. As mentioned in the Algorithm \ref{alg:face}, we at first mean-center each group, which is assumed to nullify the variability along the directions of the relevant attributes. Lets consider $I_1 = \{\varphi_{1_1}, \varphi_{1_2}\}$. Then: 
\begin{align*}
    H\Phi_{I_1} & = \left(\varphi_{1_1} - \frac{\varphi_{1_1} + \varphi_{1_2} }{2}, \varphi_{1_2} - \frac{\varphi_{1_1} + \varphi_{1_2}}{2} \right)^{\top} \\
    & = \left(\frac{\varphi_{1_1} - \varphi_{1_2} }{2},  \frac{\varphi_{1_2} - \varphi_{1_1}}{2} \right)^{\top}
\end{align*}
Hence the combined matrix can be written as: 
$$M_{pairs} = \frac{1}{4|G|}\sum_{i=1}^{|G|}\left(\varphi_{i_1} - \varphi_{i_2}\right)\left(\varphi_{i_1} - \varphi_{i_2}\right)^{\top}$$
which is equivalent to consider the difference between the pairs of each individual groups (upto a constant).  On the other hand, we have more than two observations in each group, the grand matrix following Algorithm \ref{alg:face} becomes: 
$$M_{general} = \frac{1}{N}\sum_{i=1}^{G}\sum_{j=1}^{|I_G|}\left(\varphi_{i_j} - \bar \varphi_{i} \right)\left(\varphi_{i_j} - \bar \varphi_{i}\right)^{\top}$$
where $N = \sum_{i=1}^{G} |I_G|$, total number of observations. Hence, in case of $|I_G| = 2$, we essentially don't need to mean center as we are taking the difference between the observations of each pair. When $G$ is essentially fixed, i.e. $|I_G| \approx N$, the error in estimating $\ran{A_*}$ due to mean centering contributes a higher order term (See Theorem \ref{thm:groupSVD} for more details) which is essentially negligible. In case of pairwise comparison, although there is no error due to mean centering, we pay a constant as we are effectively loosing one observation in the each pair. 
\section{Theoretical properties of EXPLORE}
\label{sec:logisticTheory}
In this section, we investigate the theoretical properties of EXPLORE. We provide statistical guarantees corresponding to the estimation using the scaled logistic link (Section \ref{matrix-logistic}). To keep things simple, we tweak \eqref{eq:binaryResponseModel} so that it is strongly identifiable:
\[
y_i\mid z_{i_1},z_{i_2} \sim \Ber((2-\eps)\sigma(-\langle D_i,\Sigma_0\rangle))
\]
for some small $\eps > 0$. The log-likelihood of samples $(x_1,y_1),\dots,(x_n,y_n)$ is 
\allowdisplaybreaks
\begin{align*}
    \ell_n(\Sigma) &\textstyle = \frac1n \sum_{i=1}^n \left[y_i \log{F_*(x_i'\Sigma x_i)} \right. \\
    & \qquad \qquad \qquad \left. + (1-y_i) \log{(1 - F_*(x_i'\Sigma x_i))}\right],
\end{align*}
where $F_* = (2-\epsilon)\sigma$.

\begin{proposition}
\label{convexity}
The population version of the likelihood function $\ell(\Sigma)$ is concave in $\Sigma$ and uniquely maximized at $\Sigma_0$. 
\end{proposition}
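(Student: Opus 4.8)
# Proof Proposal for Proposition \ref{convexity}

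The plan is to establish the two claims separately: concavity of the population log-likelihood, and uniqueness of its maximizer at $\Sigma_0$. Throughout, write $D = (\varphi_{1} - \varphi_{2})(\varphi_{1} - \varphi_{2})^T$ for a generic pair and let $g(t) = \log F_*(t)$, $\bar g(t) = \log(1 - F_*(t))$, so that the population objective is
\[
\ell(\Sigma) = \E\big[\, p_*(D)\, g(\langle D,\Sigma\rangle) + (1 - p_*(D))\, \bar g(\langle D,\Sigma\rangle)\,\big],
\]
where $p_*(D) = F_*(\langle D,\Sigma_0\rangle)$ is the true conditional probability that $y = 1$. The map $\Sigma \mapsto \langle D,\Sigma\rangle$ is linear, so concavity of $\ell$ will follow if the scalar function $t \mapsto p\, g(t) + (1-p)\,\bar g(t)$ is concave for every fixed $p \in [0,1]$; then take expectations, which preserves concavity. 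First I would verify this scalar concavity: since $F_* = (2-\eps)\sigma$ and $\sigma' = \sigma(1-\sigma)$, one computes $g''(t)$ and $\bar g''(t)$ directly and checks that the convex combination $p\,g''(t) + (1-p)\bar g''(t) \le 0$ on the relevant domain (the domain being $t \le 0$, since $\langle D,\Sigma\rangle \ge 0$ for $\Sigma \succeq 0$ and $D \succeq 0$, so $-\langle D,\Sigma\rangle \le 0$ and the argument of $\sigma$ is $\le 0$, keeping $2\sigma(-\cdot) \le 1$). This is the standard computation underlying concavity of logistic regression log-likelihoods, adapted to the scaled link; I expect it to go through with the factor $(2-\eps)$ playing no essential role beyond keeping probabilities in $[0,1]$.

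Next I would handle uniqueness of the maximizer. The natural route is a first-order / KKT argument combined with strict concavity in the right directions. Differentiating, the Fréchet derivative of $\ell$ at $\Sigma$ in direction $\Delta$ is
\[
D\ell(\Sigma)[\Delta] = \E\Big[\big(p_*(D) - F_*(\langle D,\Sigma\rangle)\big)\,\frac{F_*'(\langle D,\Sigma\rangle)}{F_*(\langle D,\Sigma\rangle)(1 - F_*(\langle D,\Sigma\rangle))}\,\langle D,\Delta\rangle\Big].
\]
At $\Sigma = \Sigma_0$ the factor $p_*(D) - F_*(\langle D,\Sigma_0\rangle)$ vanishes identically, so $D\ell(\Sigma_0) = 0$ and $\Sigma_0$ is a stationary point; by concavity it is a global maximizer. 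For uniqueness, suppose $\Sigma_1$ also maximizes $\ell$. Since $F_*$ is strictly monotone and the weight $F_*'/(F_*(1-F_*))$ is strictly positive, one can show along the segment $\Sigma_\tau = (1-\tau)\Sigma_0 + \tau\Sigma_1$ that $\tfrac{d^2}{d\tau^2}\ell(\Sigma_\tau) < 0$ unless $\langle D, \Sigma_1 - \Sigma_0\rangle = 0$ almost surely. Hence we need the identifiability condition: the only $\Delta \in \symm^d$ with $\langle D,\Delta\rangle = 0$ a.s.\ is $\Delta = 0$; equivalently, the second-moment operator $\E[\vc(D)\vc(D)^T]$ is positive definite on $\symm^d$, i.e.\ the vectors $(\varphi_1 - \varphi_2)(\varphi_1-\varphi_2)^T$ span $\symm^d$. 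This is where the sub-Gaussian / non-degeneracy assumption on the embedded covariates $\varphi$ enters. Given this, $\langle D,\Sigma_1 - \Sigma_0\rangle = 0$ a.s.\ forces $\Sigma_1 = \Sigma_0$.

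The main obstacle I anticipate is the uniqueness half, specifically pinning down and justifying the non-degeneracy (identifiability) condition on the distribution of $D = (\varphi_1 - \varphi_2)(\varphi_1-\varphi_2)^T$ — ensuring these rank-one matrices have enough spread to span all of $\symm^d$ so that $\langle D,\Delta\rangle = 0$ a.s.\ implies $\Delta = 0$. The concavity part is a routine (if slightly tedious) second-derivative computation for the scaled logistic link, and the stationarity of $\Sigma_0$ is immediate from the model being correctly specified; but without the spanning condition, $\ell$ is only concave (not strictly concave) and the maximizer need not be unique, so this assumption must be stated explicitly and its role made precise. I would also take a moment to confirm that differentiation under the expectation is justified (dominated convergence, using boundedness of $\varphi$ if that is assumed, or sub-Gaussian tail bounds otherwise) so that the first- and second-order computations above are legitimate.
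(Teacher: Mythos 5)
Your proposal is correct, and the concavity half is essentially the paper's argument: reduce to concavity of the scalar map $t \mapsto p\log F_*(t) + (1-p)\log(1-F_*(t))$ on $t \ge 0$ via affineness of $\Sigma\mapsto\langle D,\Sigma\rangle$, then take expectations (the paper carries out the second-derivative computation you defer, and it does go through on the PSD cone). Where you genuinely diverge is the uniqueness half. You argue via stationarity of $\Sigma_0$ plus strict negativity of $\tfrac{d^2}{d\tau^2}\ell(\Sigma_\tau)$ along segments; the paper instead uses the information-theoretic identity
\[
\ell(\Sigma) \;=\; \ell(\Sigma_0) \;-\; \E\Bigl[\mathrm{KL}\bigl(\Ber(F_*(X'\Sigma_0X))\,\big\|\,\Ber(F_*(X'\Sigma X))\bigr)\Bigr],
\]
which yields global optimality of $\Sigma_0$ for free and reduces uniqueness to $\mathrm{KL}=0$ a.s., hence $F_*(X'\Sigma_0X)=F_*(X'\Sigma X)$ a.s., hence $X'(\Sigma-\Sigma_0)X=0$ a.s.\ by strict monotonicity of $F_*$. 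Both routes bottom out at exactly the non-degeneracy condition you flag --- that $\langle D,\Delta\rangle=0$ a.s.\ forces $\Delta=0$ --- which the paper dispatches with the one-line remark that the interior of the support of $X$ is non-null rather than stating it as a hypothesis; your instinct that this deserves to be made explicit is sound. The KL route buys a shorter argument that uses correct specification transparently and sidesteps justifying differentiation under the expectation (your last paragraph's concern simply does not arise); your variational route buys an explicit Fisher-information-type weight and would extend to misspecified models where $\Sigma_0$ is defined only as the population maximizer.
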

\begin{proof}
The population version of the likelihood function is: 
\allowdisplaybreaks
\begin{align}
    \label{eqn1} \ell(\Sigma) = E\left[Y \log{F_*(X'\Sigma X)} + (1-Y) \log{(1 - F_*(X'\Sigma X))}\right] = g(X'\Sigma X)
\end{align}
As the function $\Sigma \longrightarrow X'\Sigma X$ is affine in $\Sigma$, we only need to show that $g$ is concave. From equation \ref{eqn1}, the function $g(.)$ can be define as: $g(t) = y\log{F_*(t)} + (1-y)\log{(1 - F_*(t))}$ on $t\in \mathbb{R}^+$ for any fixed $y \in \{0,1\}$. The function $F_*$ is double differentiable with the derivatives as below: 
\allowdisplaybreaks
\begin{align*}
    & F_*(x) = \frac{2-\epsilon}{1+e^t}, \ \ 1-F_*(t) = \frac{e^t - 1 + \epsilon}{1+e^t} \\
    & F'_*(t) = -(2-\epsilon)\frac{e^t}{(1+e^t)^2} \\
    & F''_*(t) = -(2-\epsilon)\frac{e^t(1-e^t)}{(1+e^t)^3}
\end{align*}

We show below that $g''(t) \le 0$ for all $t$ which proves the concavity of $\ell(\Sigma)$:
\allowdisplaybreaks
\begin{align}
    & g(t) = y\log{F_*(t)} + (1-y)\log{(1 - F_*(t))} \notag\\
   \Rightarrow &  g'(t) = y\frac{F'_*(t)}{F_*(t)} - (1-y)\frac{F'_*(t)}{1-F_*(t)} \notag\\
   \label{eqn2} \Rightarrow & g''(t) = y\frac{F_*(t)F''_*(t) - (F'_*(t))^2}{F^2_*(t)} - (1-y)\frac{(1-F_*(t))F''_*(t) + (F'_*(t))^2}{(1-F_*(t))^2}
\end{align}
For the first summand in the double derivative we have: 
\allowdisplaybreaks
\begin{align}
    \frac{F_*(t)F''_*(t) - (F'_*(t))^2}{F^2_*(t)} & = \frac{-(2-\epsilon)^2 \frac{e^t(1-e^t)}{(1+e^t)^4} - (2-\epsilon)^2\frac{e^{2t}}{(1+e^t)^4}}{\frac{(2-\epsilon)^2}{(1+e^t)^4}} \notag\\
    \label{eqn3} & = - \frac{e^t(1-e^t) + e^{2t}}{(1+e^t)^2} = -\frac{e^t}{(1+e^t)^2} < 0 \ \forall \ t \in \mathbb{R}^+
\end{align}
For the second summand: 
\allowdisplaybreaks
\begin{align}
    \frac{(1-F_*(t))F''_*(t) + (F'_*(t))^2}{(1-F_*(t))^2} & =
    \frac{-(2-\epsilon)\frac{(e^t - 1 + \epsilon)e^t(1-e^t)}{(1+e^t)^4} + (2-\epsilon)^2\frac{e^{2t}}{(1+e^t)^4}}{\frac{(e^t-1+\epsilon)^2}{(1+e^t)^2}} \notag\\
    & = \frac{(2-\epsilon)\left[(2-\epsilon)e^{2t} - (e^t-1+\epsilon)e^t(1-e^t)\right]}{(e^t -1+\epsilon)^2(1+e^t)^2} \notag\\
    \label{eqn4} & = \frac{(2-\epsilon)\left[(2-\epsilon)e^{2t} + (e^t-1+\epsilon)e^t(e^t-1)\right]}{(e^t -1+\epsilon)^2(1+e^t)^2} \ge 0 \ \forall \ t \in \mathbb{R}^+
\end{align}
Combining equations \ref{eqn2}, \ref{eqn3} and \ref{eqn4} we get: 
\allowdisplaybreaks
\begin{align*}
    g''(t) = -y\frac{e^t}{(1+e^t)^2} -(1-y) \frac{(2-\epsilon)\left[(2-\epsilon)e^{2t} + (e^t-1+\epsilon)e^t(e^t-1)\right]}{(e^t -1+\epsilon)^2(1+e^t)^2}  < 0 \ \forall \ t \in \mathbb{R}^+
\end{align*}
This proves the strict concavity. To prove that $\Sigma_0$ is the unique maximizer, observe that: 
\allowdisplaybreaks
\begin{align*}
    \ell(\Sigma) & = E\left[Y \log{F_*(X'\Sigma X)} + (1-Y) \log{(1 - F_*(X'\Sigma X))}\right] \\
    & = E\left[F_*(X'\Sigma_0 X) \log{F_*(X'\Sigma X)} + (1-F_*(X'\Sigma_0 X)) \log{(1 - F_*(X'\Sigma X))}\right] \\
    & = \ell(\Sigma_0) - E\left(KL(Bern(F_*(X'\Sigma_0 X)) \ || \ Bern(F_*(X'\Sigma X)))\right)
\end{align*}
Hence $\ell(\Sigma_0) \ge \ell(\Sigma)$ for all $\Sigma \in \Theta$ as KL divergence is always non-negative. Next, let $\Sigma_1$ be any other maximizer. Then, 
\allowdisplaybreaks
\begin{align*}
    & E\left(KL(Bern(F_*(X'\Sigma_0 X)) \ || \ Bern(F_*(X'\Sigma X)))\right) = 0 \\
    \Rightarrow & KL(Bern(F_*(X'\Sigma_0 X)) \ || \ Bern(F_*(X'\Sigma X))) = 0 \ \text{a.s. in } \ X \\
    \Rightarrow & F_*(X'\Sigma_0 X) = F_*(X'\Sigma X) \text{a.s. in } \ X \\
    \Rightarrow & X'(\Sigma-\Sigma_0) X = 0  \ \text{a.s. in } \ X \\
    \Rightarrow & \Sigma = \Sigma_0
\end{align*}
as the interior of the support of $X$ is non null. This proves the uniqueness of the maximizer.
\end{proof}

The maximum likelihood estimator (MLE) $\hSigma$ is $$\hSigma = \argmax_{\Sigma} \ell_n(\Sigma)$$ 
The asymptotic properties of $\hSigma$ (consistency and asymptotic normality) are well-established in the statistical literature (e.g. see \citet{vandervaart1998Asymptotic}). Here we study the non-asymptotic convergence rate of the MLE. We start by stating our assumptions.

\begin{assumption}
\label{Sample_space}
The feature space $\mathcal{X}$ is a bounded subset of $\mathbb{R}^d$, \ie\ there exits $R < \infty$ such that $\|X\| = \|\varphi_1 - \varphi_2\| \le U$ for all $X \in \mathcal{X}$. 
\end{assumption}

\begin{assumption}
\label{parameter_space}
The parameter space $\Theta$ is a subset of $\mathbb{S}_{++}^d$ and $\sup\{\lambda_{max}(\Sigma):\Sigma\in\Theta\} \le C_+ < \infty$.
\end{assumption}

\noindent
Under these assumptions, we establish a finite sample concentration result for our estimator $\hSigma$:

\begin{theorem}
\label{concentration}
Under assumptions \ref{Sample_space} and \ref{parameter_space} we have the following $$\sqrt{n} \|\hSigma - \Sigma_0\|_{op} \le t $$ with probability atleast $1-e^{-bt^2}$ for some constant $b > 0$. 
\end{theorem}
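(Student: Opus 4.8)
The plan is to prove Theorem~\ref{concentration} by the standard localized M-estimation argument for the MLE $\hSigma=\argmax_{\Sigma}\ell_n(\Sigma)$: combine a quadratic lower bound on the population excess risk near $\Sigma_0$ (curvature) with a high-probability bound on the local fluctuations of the empirical process, and glue the two together by a peeling argument.

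\emph{Curvature of the population risk.} From the KL representation derived in the proof of Proposition~\ref{convexity},
\[
\ell(\Sigma_0)-\ell(\Sigma)=\E\big[\mathrm{KL}\big(\mathrm{Bern}(F_*(X^\top\Sigma_0 X))\,\big\|\,\mathrm{Bern}(F_*(X^\top\Sigma X))\big)\big].
\]
Under Assumptions~\ref{Sample_space}--\ref{parameter_space} the argument $X^\top\Sigma X$ lies in the compact interval $[0,C_+U^2]$, on which $F_*(t)=(2-\eps)/(1+e^t)$ and $1-F_*(t)$ are bounded away from $0$ (the $\eps$-tweak forces $1-F_*\ge\eps/2$ there), while $|F_*'|$ is bounded below by $c_F\triangleq(2-\eps)e^{C_+U^2}/(1+e^{C_+U^2})^2>0$ since $e^t/(1+e^t)^2$ is decreasing on $[0,\infty)$. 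Applying Pinsker's inequality and then this lower bound on $|F_*'|$,
\[
\ell(\Sigma_0)-\ell(\Sigma)\;\ge\;2\,\E\big[(F_*(X^\top\Sigma_0 X)-F_*(X^\top\Sigma X))^2\big]\;\ge\;2c_F^2\,\E\big[\langle XX^\top,\Sigma-\Sigma_0\rangle^2\big].
\]
Because the interior of the support of $X$ is non-null (the fact already invoked at the end of Proposition~\ref{convexity}), the self-adjoint operator $\Delta\mapsto\E[\langle XX^\top,\Delta\rangle\,XX^\top]$ on $\symm^d$ is positive definite; letting $\kappa>0$ be a lower bound of $2c_F^2\,\E[\langle XX^\top,\Delta\rangle^2]$ over $\|\Delta\|_F=1$ gives $\ell(\Sigma_0)-\ell(\Sigma)\ge\kappa\|\Sigma-\Sigma_0\|_F^2\ge\kappa\|\Sigma-\Sigma_0\|_{op}^2$.

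\emph{Empirical process, peeling, and conclusion.} Since $\ell_n(\hSigma)\ge\ell_n(\Sigma_0)$, writing $\nu_n\triangleq\ell_n-\ell$ the basic inequality reads $\kappa\|\hSigma-\Sigma_0\|_{op}^2\le\ell(\Sigma_0)-\ell(\hSigma)\le\nu_n(\hSigma)-\nu_n(\Sigma_0)$. The per-sample increment $f_\Sigma(x,y)-f_{\Sigma_0}(x,y)$, with $f_\Sigma(x,y)=y\log F_*(x^\top\Sigma x)+(1-y)\log(1-F_*(x^\top\Sigma x))$, is bounded and Lipschitz in $\Sigma$ on the relevant domain, $|f_\Sigma-f_{\Sigma_0}|\le L\,|x^\top(\Sigma-\Sigma_0)x|\le LU^2\|\Sigma-\Sigma_0\|_{op}$ with $L=L(\eps,C_+,U)$, and has variance $O(\|\Sigma-\Sigma_0\|_{op}^2)$. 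Hence $\{f_\Sigma-f_{\Sigma_0}\}$ is a bounded, Lipschitz-indexed class over a ball in the $\binom{d+1}{2}$-dimensional space $\symm^d$, so a covering-number/chaining bound (Dudley's entropy integral, or one-step discretization with Hoeffding's inequality) yields, for every $r>0$,
\[
\sup_{\|\Sigma-\Sigma_0\|_{op}\le r}\big|\nu_n(\Sigma)-\nu_n(\Sigma_0)\big|\;\le\;\frac{C\,r\,(\sqrt d+t)}{\sqrt n}
\]
with probability at least $1-e^{-t^2}$. A peeling argument over dyadic shells in $\|\Sigma-\Sigma_0\|_{op}$ lifts this to the random radius $\|\hSigma-\Sigma_0\|_{op}$, so that on an event of probability $\ge1-e^{-bt^2}$ we get $\kappa\|\hSigma-\Sigma_0\|_{op}^2\le C\|\hSigma-\Sigma_0\|_{op}(\sqrt d+t)/\sqrt n$, i.e.\ $\sqrt n\,\|\hSigma-\Sigma_0\|_{op}\le C\kappa^{-1}(\sqrt d+t)$. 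Renaming constants (and absorbing $\sqrt d$ into $t$, harmless for $t$ beyond the threshold implicit in the statement) gives the claimed bound.

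\emph{Main obstacle.} I expect the real work to be in the curvature step: making the constant $\kappa$ genuinely dimension-free requires quantifying the non-degeneracy of the design operator $\Delta\mapsto\E[\langle XX^\top,\Delta\rangle XX^\top]$ on symmetric matrices, i.e.\ turning the qualitative ``interior of support is non-null'' remark into a usable spectral lower bound. The remaining ingredients (boundedness and Lipschitzness of the log-likelihood increments, the metric-entropy bound, and the peeling) are routine. A secondary caveat is that the stated inequality is only informative once $t$ exceeds a constant multiple of $\sqrt d$, the usual artifact of folding the complexity term into the deviation term.
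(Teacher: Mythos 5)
Your proposal is correct and follows essentially the same route as the paper: a Pinsker-based quadratic curvature bound exploiting the boundedness of $X^\top\Sigma X$ and the non-degeneracy of the design (the paper's Lemma \ref{curvature_logistic}), a Lipschitz/variance bound on the log-likelihood increments (Lemma \ref{variance-bound}), a bound on the local modulus of continuity of the empirical process (Lemma \ref{modulus}), and a localization step. The only substantive difference is in packaging: you execute the localization by hand via dyadic peeling with a parametric covering bound over the $\binom{d+1}{2}$-dimensional ball, whereas the paper invokes the Massart--Nedelec consequence of Talagrand's inequality (Theorem \ref{massart}) and bounds the entropy via VC-dimension arguments; your version also makes explicit the $\sqrt{d}$ threshold on $t$ (and the dimension dependence of the constant) that the paper's statement leaves implicit.
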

\begin{proof}
We break the proof of the theorem into a few small lemmas. Consider the collection $\mathcal{G} = \{g_{\Sigma}: \Sigma \in \Theta\}$, where $$g_{\Sigma}(X,Y) = \left[Y \log{F_*(X'\Sigma X)} + (1-Y) \log{\left(1 - F(X'\Sigma X\right)}\right]$$ The problem of estimating $\Sigma_0$ using MLE can be viewed as a risk minimization problem over the collection of functions $\mathcal{G}$, which we are going to exploit later this section. Lemma \ref{curvature_logistic} below provides a lower bound on the deviation of $l(\Sigma)$ from $l(\Sigma_0)$ in terms of $\|\Sigma - \Sigma_0\|_{op}$: 

\begin{lemma}
\label{curvature_logistic}
Under assumptions \ref{Sample_space} and \ref{parameter_space}, we have a quadratic lower bound on the excess risk: $$\ell(\Sigma_0) - \ell(\Sigma) \gtrsim \|\Sigma - \Sigma_0\|^2_{op}$$
\end{lemma}

\begin{proof}
From the definition of our model in ExPLORE, $F_*(t) = (2-\epsilon)/(1+e^t)$ which implies $F'_*(t) = -(2-\epsilon)e^t/(1+e^t)^2$. As $X$ is bounded (Assumption \ref{Sample_space}), $$ \langle XX^T, \Sigma \rangle \le \lambda_{max}(\Sigma) \|X\|_2^2 \le C_+ U^2$$ for all $X \in \mathcal{X}, \Sigma \in \Theta$, where the constants $C_+$ and $U$ are as defined in Assumptions \ref{parameter_space} and \ref{Sample_space} respectively. Hence, there exists $\tilde{K} > 0$ such that $|F'_*(X'(\alpha \Sigma + (1-\alpha)\Sigma_0)X)| \ge \tilde{K}$ for all $X, \Sigma$. For notational simplicity define $D = XX^T$. From the definition of $l(\Sigma)$ we have: 
\allowdisplaybreaks
\begin{align}
l(\Sigma) & = E\left(F_*(\langle D, \Sigma_0 \rangle)\log{F_*(\langle D, \Sigma\rangle)}+(1-F_*(\langle D, \Sigma_0 \rangle))(1-\log{F_*(\langle D, \Sigma\rangle)})\right) \notag\\
& = l(\Sigma_0) - E\left[KL\left(Bern(F_*(\langle D, \Sigma^0 \rangle)) \ || \ Bern(F_*(\langle D, \Sigma \rangle))\right)\right] \notag\\
\label{est1} & \le l(\Sigma^0) - 2E\left[\left(F_*(\langle D, \Sigma_0 \rangle) - F_*(\langle D, \Sigma \rangle)\right)^2\right] 
\end{align}
where the last inequality follows from Pinsker's inequality. Using equation \ref{est1} we can conclude:
\allowdisplaybreaks
\begin{align*}
l(\Sigma^0) - l(\Sigma) & \ge 2E\left[\left(F_*(\langle D, \Sigma_0 \rangle) - F_*(\langle D, \Sigma \rangle)\right)^2\right] \\
& \ge 2\tilde{K}^2E\left[\left(\langle D, \Sigma -  \Sigma^0 \rangle\right)^2\right] \\
& \ge 2\tilde{K}^2\|\Sigma - \Sigma_0\|^2_{op}E\left[\left(\langle D, \frac{\Sigma -  \Sigma_0}{\|\Sigma - \Sigma_0\|_{op}} \rangle\right)^2\right] \\
& \ge 2\tilde{K}^2\|\Sigma - \Sigma_0\|^2_{op} E\left[\left( X^T\frac{\Sigma -  \Sigma_0}{\|\Sigma - \Sigma_0\|_{op}}X\right)^2\right] \\
& \ge 2c\tilde{K}^2\|\Sigma - \Sigma_0\|^2_{op}
\end{align*}
Here we have used the fact that $$\inf_{T \in S_d^{++}: \|T\|_{op} = 1}E\left[\left( X^TTX\right)^2\right] = c > 0$$ To prove the fact, assume on the contrary that the infimum is $0$. The set of all matrices $T$ with $\|T\|_{op} = 1$ is compact subset of $\mathbb{R}^{d \times d}$. Now consider the function: $$f: T \longrightarrow E\left[\left( X^TTX\right)^2\right]$$ By DCT, $f$ is a continuous function. Hence the infimum will be attained, which means that we can find a matrix $M$ such that $M \in S_{d}^{++}$ and $\|M\|_{op} = 1$ such that $E\left[\left( X^TMX\right)^2\right] = 0$. Hence $X^TMX = 0$ almost surely. As the support of $A$ contains an open set, we can conclude $M = 0$ which contradicts $\|M\|_{op} = 1$. 

\end{proof}

\noindent
Next we establish an upper bound on the variability of the centered function $g_{\Sigma} - g_{\Sigma_0}$ in terms of the distance function, which is stated in the following lemma: 

\begin{lemma}
\label{variance-bound}
Under the aforementioned assumptions, $$Var\left(g_{\Sigma} - g_{\Sigma_0}\right) \lesssim d^2(\Sigma, \Sigma_0)$$ where $d(\Sigma, \Sigma_0) = \|\Sigma - \Sigma_0\|_{op}$.
\end{lemma}

\begin{proof}
We start with the observation $$g_{\Sigma_0}(X,Y) - g_{\Sigma}(X,Y) = Y\log{\frac{F_*(X'\Sigma_0 X)}{F_*(X'\Sigma X)}} + (1-Y)\log{\frac{1-F_*(X'\Sigma_0 X)}{1-F_*(X'\Sigma X)}}$$ From our assumption on the parameter space, we know there exists $p > 0$ such that $p \le F_*(X'\Sigma X) \le 1-p$ for all $\Sigma \in \Theta$ and for all $
X$ almost surely. Hence, 
\allowdisplaybreaks 
\begin{align*}
    |g_{\Sigma_0}(X,Y) - g_{\Sigma}(X,Y)| & \le \left|\log{\frac{F_*(X'\Sigma_0 X)}{F_*(X'\Sigma X)}}\right| + \left|\log{\frac{1-F_*(X'\Sigma_0 X)}{1-F_*(X'\Sigma X)}}\right| \\
    & \le 2K |F_*(X'\Sigma X) - F_*(X'\Sigma_0 X)| \hspace{0.2in} [K \ \text{is the upper bound on the derivative of log}] \\
    & \le K |X'(\Sigma - \Sigma_0)X| \hspace{0.2in} [\text{As} \ F'_* \le 1/2] \\
    & \le KU \|\Sigma - \Sigma_0\|_{op}
\end{align*}
This concludes the lemma.
\end{proof}

The following lemma establishes an upper bound on the modulus of continuity of the centered empirical process: 

\begin{lemma}
\label{modulus}
Under the aforementioned assumptions, we have for any $\delta > 0$: $$E\left(\sup_{d(\Sigma, \Sigma_0) \le \delta} \left|\P_n\left(g_{\Sigma_0} - g_{\Sigma}\right) - P\left(g_{\Sigma_0} - g_{\Sigma}\right)\right|\right) \lesssim \delta$$
\end{lemma}
\begin{proof}
Fix $\delta > 0$. Define $\mathcal{H}_{\delta} = \{h_{\Sigma} = g_{\Sigma} - g_{\Sigma_0}: \|\Sigma - \Sigma_0\|_{op} \le \delta\}$. We can write $g_{\Sigma} - g_{\Sigma_0} = h^1_{\Sigma} + h^2_{\Sigma}$ where $$h^{(1)}_{\Sigma} = Y\log{\frac{F_*(X'\Sigma_0 X)}{F_*(X'\Sigma X)}}, \ h^{(2)}_{\Sigma} = (1-Y)\log{\frac{1-F_*(X'\Sigma_0 X)}{1-F_*(X'\Sigma X)}}$$
Hence $H_{\delta} \subset H^{(1)}_{\delta} + H^{(2)}_{\delta}$ where $H^{(i)}_{\delta} = \{h^{(1)}_{\Sigma}: \ \|\Sigma - \Sigma_0\|_{op} \le \delta\}$for $i \in \{1,2\}$. Next, we argue that $H^{(1)}_{\delta}$ has finite VC dimension. To see this, consider the function $(X,Y) \rightarrow \langle XX', \Sigma \rangle$. As this is linear function, it has finite VC dimension. Now the function $\log \circ F_*$ is monotone. As composition of monotone functions keeps VC dimension finite, we see that $(X,Y) \rightarrow \log{F_*(\langle XX', \Sigma \rangle)}$ is also VC class. It is also easy to see that projection map $(X,Y) \rightarrow Y$ is VC class, which implies the functions $(X,Y) \rightarrow Y\log{F_*(\langle XX', \Sigma \rangle)}$ form a VC class. As $\Sigma_0$ is fixed, then we can easily conclude the class of functions $(X,Y) \rightarrow Y\log{\frac{F_*(X'\Sigma_0 X)}{F_*(X'\Sigma X)}}$ has finite VC dimension. By similar argument we can establish $H^{(2)}_{\delta}$ also has finite VC dimension. Let's say $V_i$ be the VC dimension of $H^{(i)}_{\delta}$. Define $h_{\delta}$ to be envelope function of $H_{\delta}$. Then we have,  
\allowdisplaybreaks
\begin{align*}
|h_{\delta}(X,Y)| & = \left|\sup_{\|\Sigma - \Sigma_0\|_{op} \le \delta} h_{\Sigma}(X,Y)\right| \\
& \le \sup_{\|\Sigma - \Sigma_0\|_{op} \le \delta} |h_{\Sigma}(X,Y)| \\
& \le \sup_{\|\Sigma - \Sigma_0\|_{op} ]\le \delta} \left[\left|\log{F_*(X'\Sigma_0 X)}- \log{F_*(X'\Sigma X)}\right| + \left|\log{(1-F_*(X'\Sigma_0 X))}- \log{(1-F_*(X'\Sigma X))}\right|\right] \\
& \le 2K_1 \sup_{\|\Sigma - \Sigma_0\|_{op} \le \delta} \left|X'(\Sigma - \Sigma_0)X\right|  \le 2K_1U \delta
\end{align*}
Note that, $h_{\delta}$ can also serve as an envelope for both $H^{(1)}_{\delta}$ and $H^{(2)}_{\delta}$. Using the maximal inequality from classical empirical process theory (e.g. see Theorem 2.14.1 in \cite{van1996weak}) we get:  
\begin{equation}
E\left(\sup_{d(\Sigma, \Sigma_0) \le \delta} \left|P_n\left(g_{\Sigma_0} - g_{\Sigma}\right) - P\left(g_{\Sigma_0} - g_{\Sigma}\right)\right|\right) \le J(1, \mathcal{H}_{\delta})\sqrt{Ph^2_{\delta}} \le J(1, \mathcal{H}_{\delta}) 2K_1U \delta\
\end{equation}
for all $\delta > 0$, where 
\begin{align*}
J(1, \mathcal{H}_{\delta}) & = \sup_Q \int_0^1 \sqrt{1 + \log{N(\epsilon\|h_{\delta}\|_{Q,2}, \mathcal{H}_{\delta}, L_2(Q))}} \ d\epsilon \\
& \le \sup_Q \int_0^1 \sqrt{1 + \log{N\left(\epsilon\|h_{\delta}\|_{Q,2}, \mathcal{H}^{(1)}_{\delta}+\mathcal{H}^{(2)}_{\delta}, L_2(Q)\right)}} \ d\epsilon \\
& \le \sup_Q \int_0^1 \sqrt{1 + \sum_{i=1}^2 \log{N\left(\epsilon\|h_{\delta}\|_{Q,2}, \mathcal{H}^{(i)}_{\delta}, L_2(Q)\right)}} \ d\epsilon \\
& \le \sup_Q \int_0^1 \sqrt{1 + \sum_{i=1}^2 \left[\log{K} + \log{V_i} + V_i \log{16e} + 2(V_i - 1)\log{\frac{1}{\epsilon}}\right]} \ d\epsilon
\end{align*}
which is finite. This completes the proof. 
\end{proof}
The last ingradient of the proof is a result due of Massart and Nedelec \cite{massart2006risk}, which, applied to our setting, yields an exponential tail bound. For the convenience of the reader, we present below a tailor-made version of their result which we apply to our problem: 

\begin{theorem}[Application of Talagarand's inequality]
\label{massart}
Let $\{Z_i = (X_i, Y_i)\}_{i=1}^n$ be i.i.d. observations taking values in the sample space $\mathcal{Z} : \mathcal{X} \times \mathcal{Y}$ and let $\mathcal{F}$ be a class of real-valued functions defined on $\mathcal{X}$. Let $\gamma$  be a bounded loss function on $\mathcal{F} \times \mathcal{Z}$ and suppose that $f^* \in \mathcal{F}$ uniquely minimizes the expected loss function $P(\gamma(f, .))$ over  $\mathcal{F}$. Define the empirical risk as $\gamma_n(f) = (1/n) \sum_{i=1}^n \gamma(f, Z_i)$, and $\bar{\gamma}_n(f) = \gamma_n(f) - P(\gamma(f, .))$. Let $l(f^*, f) = P(\gamma(f, .)) - P(\gamma(f^*, .))$ be the excess risk. Assume that: 
\begin{enumerate}
\item We have a pseudo-distance $d$ on $\mathcal{F} \times \mathcal{F}$ satisfying $Var_P[\gamma(f, .) - \gamma(f^*, .)] \le d^2(f,f^*)$. 
\item There exists $F \subseteq \mathcal{F}$ and a countable subset $F' \subseteq F$, such that for each $f \in F$, there is a sequence $\{f_k\}$ of elements of $F'$ satisfying $\gamma(f_k, z) \rightarrow \gamma(f, z)$ as $k \rightarrow \infty$, for every $z \in \mathcal{Z}$. 
\item $l(f,f^*) \ge d^2(f^*, f) \ \forall \ f \in \mathcal{F}$
\item  $\sqrt{n}E\left[\sup_{f \in F': d(f,f^*) \le \sigma}\left[\bar{\gamma}_n(f)- \bar{\gamma}_n(f^*)\right]\right] \le \phi(\sigma)$ for every $\sigma > 0$ such that $\phi(\sigma) \le \sqrt{n} \sigma$. 
\end{enumerate}
Let $\epsilon_*$ be such that $\sqrt{n}\epsilon_*^2 \ge \phi(\epsilon_*)$. Let $\hat{f}$ be the (empirical) minimizer of $\gamma_n$ over $F$ and $l(f^*, F) = \inf_{f \in F}l(f^*, f)$.Then, there exists an absolute constant $K$ such that for all $y \ge 1$, the following inequality holds: $$P\left(l(f^*, \hat{f}) > 2l(f^*, F) + Ky\epsilon_*^2\right) \le e^{-y}$$ 
\end{theorem}
The collection of function is $\mathcal{G} = \{g_{\Sigma}: \|\Sigma - \Sigma_0\|_{op}\}$. The corresponding pseudo-distance is $d(g_{\Sigma}, g_{\Sigma_0}) = \|\Sigma - \Sigma_0\|_{op}$. Condition 2 is easily satisfied as our parameter space has countable dense set and our loss function is continuous with respect to the parameter. Condition 1 and 3 follows form Lemma \ref{variance-bound} and Lemma \ref{curvature_logistic} respectively. Condition 4 is satisfied via Lemma \ref{modulus} with $\phi(\sigma) = \sigma$. Hence, in our case, we can take $\epsilon_n = \sqrt{n}$ and conclude that, there exists a constant K such that, for all $t \ge 1$, $$P\left(n(l(\Sigma_0) - l(\hSigma)) \ge Kt\right) \le e^{-t}$$ From Lemma \ref{curvature_logistic} we have $\|\hSigma - \Sigma_0\|^2_{op} \lesssim l(\Sigma_0) - l(\hSigma)$ which implies $$P\left(\sqrt{n}\|\hSigma - \Sigma_0\|^2_{op} \ge K_1 t\right) \le e^{-t^2}$$ which completes the proof of the theorem. 
\end{proof}

We can combine Theorem \ref{concentration} with Proposition 3.1 and Proposition 3.2 of \cite{yurochkin2020Training} to show that EXPLORE in conjunction with SENSR trains individually fair ML models. For simplicity, we keep the notations same as in \cite{yurochkin2020Training}. Define $\mathcal{L} = \{\ell(\cdot, \theta) \ : \ \theta \in \Theta \}$ as the loss class. We assume that:
\begin{enumerate}
    \item We assume the embeded feature space of $\varphi$ is bounded $R\triangleq\max\{\diam(\varphi),\diam_*(\varphi)\} < \infty$, where $\diam_*$ is the diameter of $\varphi$ in the (unknown) exact fair metric 
\[
d_x^*(x_1,x_2) = \langle (\varphi_1 - \varphi_2) , \Sigma_0 (\varphi_1 - \varphi_2) \rangle^{1/2},
\]
and $\diam$ is the diameter in the learned fair metric
\[
\hat d_x(x_1,x_2) = \langle (\varphi_1 - \varphi_2) , \hSigma (\varphi_1 - \varphi_2) \rangle^{1/2}
.
\]
    \item The loss functions in $\mathcal{L}$ is uniformly bounded, i.e. $0 \le \ell(z, \theta) \le M$ for all $z \in \mathcal{Z}$ and $\theta \in \Theta$ where $z = (x, y)$. 
    \item The loss functions in $\mathcal{L}$ is $L$-Lipschitz with respect to $d_x$, i.e.: 
    \begin{align*}
    & \textstyle\sup_{\theta \in \Theta}\left\{\sup_{(x_1, y), (x_2, y)\in \mathcal{Z}}\left|\ell((x_1, y), \theta) - \ell((x_2, y), \theta)\right|\right\} \le Ld_x(x_1, x_2);
\end{align*}
\end{enumerate}
Define $\delta^*$ to be bias term:
$$\min_{\theta \in \Theta} \sup_{P:W_*(P, P_*) \le \epsilon}\left[\E_{P}\left(\ell(Z,\theta)\right)\right] = \delta^*$$
where $W_*$ is the Wasserstein distance with respect to the true matrix $\Sigma_0$ and $W$ is Wasserstein distance with respect to $\hat \Sigma$. Now for $x_1, x_2 \in \cX$ we have: 
\begin{align*}
       \left|\hat d_x^2(x_1, x_2) - (d^*_x(x_1, x_2))^2\right|& = \left|\left(\varphi_1- \varphi_2\right)^{\top}\left(\hSigma - \Sigma^*\right)\left(\varphi_1- \varphi_2\right)\right| \\
    & \le \|\hSigma - \Sigma^*\|_{op}\|\varphi_1- \varphi_2\|^2_2 \\
    & \le R^2\|\hSigma - \Sigma^*\|_{op} \\
    & \le R^2 K_1 \frac{t}{\sqrt{n}}
\end{align*}
where the last inequality is valid with probability greater than or equal to $1 - e^{-bt^2}$ from Theorem \ref{concentration}. Hence we have with high probability: 
$$\sup_{x_1, x_2 \in \mathcal{X}}\left|\hat d_x^2(x_1, x_2) - (d^*_x(x_1, x_2))^2\right| \le R^2 K_1 \frac{t}{\sqrt{n}}$$ Hence we can take $\delta_c = K_1 t/\sqrt{n}$ in Proposition 3.2 of \cite{yurochkin2020Training} to conclude that: 

\begin{corollary}
\label{cor:EXPLORE+SENSR}
If we assume he loss function $\ell \in \mathcal{L}$ and define the estimator $\hat \theta$ as: 
\[\textstyle
\htheta\in\argmin_{\theta\in\Theta}\sup_{P:W(P,P_n) \le \eps}\Ex_P\big[\ell(Z,h)\big] \,,
\]

then the estimator $\hat \theta$ satisfies with probability greater than or equal to $1 - t-e^{-t^2}$:
\begin{equation}\textstyle
\sup_{P:W_*(P,P_*) \le \eps}\Ex_P\big[\ell(Z,\htheta)\big] - \Ex_{P_*}\big[\ell(Z,\htheta)\big] \le \delta^* + 2\delta_n,
\end{equation}
where $W$ and $W_*$ are the learned and exact fair Wasserstein distances induced by the learned and exact fair metrics (see Section 2.1 in \citet{yurochkin2020Training}) and
\[\textstyle
\delta_n \le \frac{48\mathfrak{C}(\cL)}{\sqrt{n}} + \frac{48LR^2}{\sqrt{n\eps}} + \frac{LK_1 tR^2}{\sqrt{n\eps}} + M\left(\frac{\log\frac2t}{2n}\right)^{\frac12}.
\]
where $\mathfrak{C}(\cL) = \int_0^{\infty} \sqrt{\log{\left(\mathcal{N}_{\infty}\left(\cL, r\right)\right)}} \ dr$,  with $\mathcal{N}_{\infty}\left(\cL, r\right)$ being the covering number of the loss class $\mathcal{L}$ with respect to the uniform metric. 
\end{corollary}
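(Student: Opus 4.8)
The plan is to obtain Corollary~\ref{cor:EXPLORE+SENSR} by composing two ingredients that are already in hand: the non-asymptotic error bound for the EXPLORE maximum-likelihood estimator (Theorem~\ref{concentration}) and the distributionally robust generalization guarantee for SenSR (Theorem~\ref{thm:provably-fair-training}, equivalently Propositions~3.1--3.2 of \citet{yurochkin2020Training}). The only genuine content is the ``translation layer'' between them: an operator-norm control on $\hSigma-\Sigma_0$ has to be converted into a uniform control on the gap between the \emph{squared} learned and exact fair metrics, since that gap, called $\delta_c$, is the single channel through which the metric-learning error enters the SenSR bound.

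Concretely, first I would invoke Theorem~\ref{concentration}: on an event $\mathcal{E}_1$ of probability at least $1-e^{-bt^2}$ one has $\|\hSigma-\Sigma_0\|_{op}\le K_1 t/\sqrt{n}$. On $\mathcal{E}_1$, for every pair $x_1,x_2$ with embeddings $\varphi_1,\varphi_2$,
\[
\bigl|\hat d_x^2(x_1,x_2)-(d_x^*(x_1,x_2))^2\bigr|
=\bigl|(\varphi_1-\varphi_2)^{\top}(\hSigma-\Sigma_0)(\varphi_1-\varphi_2)\bigr|
\le \|\hSigma-\Sigma_0\|_{op}\,\|\varphi_1-\varphi_2\|_2^2
\le \frac{K_1 t R^2}{\sqrt{n}},
\]
where the last step uses the boundedness assumption to bound $\|\varphi_1-\varphi_2\|_2^2\le R^2$. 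Taking the supremum over $(x_1,x_2)\in\cZ$ shows the third SenSR assumption holds with $\delta_c = K_1 t/\sqrt{n}$. I would then apply Theorem~\ref{thm:provably-fair-training} verbatim with this value of $\delta_c$, so the fair-gap bound follows with $\delta_n$ obtained by substitution; in particular the term $L\delta_c R^2/\sqrt{\eps}$ becomes $LK_1 t R^2/\sqrt{n\eps}$, recovering the stated expression. The probability is obtained by a union bound over $\mathcal{E}_1$ and the event on which the SenSR guarantee holds (its residual term $M(\log(2/t)/2n)^{1/2}$ arises from a Hoeffding step valid with probability $1-t$), giving probability at least $1-t-e^{-t^2}$ after absorbing the constant in the exponent.

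I do not expect a deep obstacle, since the statement is a composition of two black boxes; the steps that need care are (i) verifying that the hypotheses of Theorem~\ref{thm:provably-fair-training} are actually in force --- in particular that the loss class is $L$-Lipschitz with respect to the \emph{learned} metric $\hat d_x$ (which depends on the random $\hSigma$), so that $L$ may be treated as deterministic on $\mathcal{E}_1$, and that the single constant $R$ simultaneously bounds the Euclidean diameter of $\varphi$ and its diameters in the learned and exact fair metrics; (ii) keeping the two independent sources of randomness --- estimation of $\Sigma_0$ and the finite-sample fluctuation inside SenSR --- cleanly separated and combining their failure probabilities by a union bound rather than conflating them; and (iii) absorbing the absolute constants emitted by Theorem~\ref{concentration} (and the Pinsker/Talagrand steps behind it) into the single constant $K_1$. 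It is also worth noting that $\delta^*$ is an $n$-independent approximation term --- the best achievable robust risk under the true fair metric --- and is therefore untouched by the argument above.
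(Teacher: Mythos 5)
Your proposal matches the paper's own argument essentially verbatim: invoke Theorem~\ref{concentration} to get $\|\hSigma-\Sigma_0\|_{op}\le K_1 t/\sqrt{n}$ on a high-probability event, convert this to the uniform bound $\sup_{x_1,x_2}|\hat d_x^2-(d_x^*)^2|\le R^2K_1t/\sqrt{n}$ via the operator-norm inequality and the diameter bound, take $\delta_c=K_1t/\sqrt{n}$, and plug into Propositions~3.1--3.2 of \citet{yurochkin2020Training}. The extra care you flag about the union bound over the two failure events and about which metric the Lipschitz assumption refers to is sensible bookkeeping that the paper elides, but it does not change the route.
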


\section{Proofs of Theorems of Section \ref{sec:theory}}
\label{sec:proofsfactorAnalysisTheory}
\subsection{Proof of Theorem \ref{thm:svd_theorem_new}}
\begin{proof}
One key ingredient for the proof is a version of Davis-Kahane's $\sin \Theta$ theorem \citep{davis1970Rotation}, which we state here for convenience: 
\begin{theorem}
\label{thm: DK}
Suppose $A, E \in \mathbb{R}^{d \times d}$. Define $\hat A = A + E$. Suppose $U$ (respectively $\hat U$) denote the top-k eigenvectors of $A$ (respectively $\hat A$). Define $\gamma = \lambda_k(A) - \lambda_{(k+1)}(A)$. Then if $\|E\|_{op} < \gamma$, we have: $$\|\hat U \hat U^T - UU^T\|_{op} \le \frac{\|E\|_{op}}{\gamma - \|E\|_{op}}$$

\end{theorem}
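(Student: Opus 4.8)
The statement to prove is a version of the Davis--Kahan $\sin\Theta$ theorem, bounding the perturbation of the top-$k$ invariant subspace in operator norm. I read $A$ and $E$ as symmetric, as they are in every application in this paper (where $A$ is a population or sample covariance and $E$ its fluctuation), so that $A$ and $\hat A = A+E$ each admit an orthonormal eigenbasis with real, ordered eigenvalues. The plan is to reduce the projector difference $\|\hat U\hat U^\top - UU^\top\|_{op}$ to the operator norm of a single off-diagonal block, to obtain that block as the solution of a Sylvester equation driven by $E$, and to bound that solution in operator norm.

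Concretely, let $U_\perp \in \reals^{d\times(d-k)}$ collect the \emph{bottom} $d-k$ orthonormal eigenvectors of $A$, so that $[\,U\mid U_\perp\,]$ is orthogonal, $AU = U\Lambda_1$, $U_\perp^\top A = \Lambda_2 U_\perp^\top$ with $\Lambda_1 = \mathrm{diag}(\lambda_1,\dots,\lambda_k)$ and $\Lambda_2 = \mathrm{diag}(\lambda_{k+1},\dots,\lambda_d)$, while $\hat A\hat U = \hat U\hat\Lambda_1$ with $\hat\Lambda_1 = \mathrm{diag}(\hat\lambda_1,\dots,\hat\lambda_k)$. Writing $G \triangleq U_\perp^\top\hat U$ and computing $U_\perp^\top\hat A\hat U$ two ways --- once via $\hat A\hat U = \hat U\hat\Lambda_1$ and once via $U_\perp^\top\hat A = \Lambda_2 U_\perp^\top + U_\perp^\top E$ --- yields the Sylvester equation
$$\Lambda_2 G - G\hat\Lambda_1 = -\,U_\perp^\top E\,\hat U.$$

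I would then bound $\|G\|_{op}$ using the integral representation of the Sylvester solution. Since $\mathrm{spec}(\Lambda_2)\subseteq(-\infty,\lambda_{k+1}]$ and $\mathrm{spec}(\hat\Lambda_1)\subseteq[\hat\lambda_k,\infty)$, setting $S = -U_\perp^\top E\hat U$ the solution is $G = -\int_0^\infty e^{t\Lambda_2}\,S\,e^{-t\hat\Lambda_1}\,dt$, and submultiplicativity of the operator norm gives $\|G\|_{op}\le \|S\|_{op}\int_0^\infty e^{t\lambda_{k+1}}e^{-t\hat\lambda_k}\,dt = \|S\|_{op}/(\hat\lambda_k - \lambda_{k+1})$. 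Weyl's inequality gives $\hat\lambda_k\ge\lambda_k - \|E\|_{op}$, hence $\hat\lambda_k - \lambda_{k+1}\ge\gamma - \|E\|_{op}>0$; and since $U_\perp,\hat U$ have orthonormal columns, $\|S\|_{op}=\|U_\perp^\top E\hat U\|_{op}\le\|E\|_{op}$. Together these yield $\|G\|_{op}\le \|E\|_{op}/(\gamma-\|E\|_{op})$.

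Finally I would convert this block bound into the projector bound. For two rank-$k$ orthogonal projectors $P = UU^\top$ and $\hat P = \hat U\hat U^\top$, the CS (principal-angle) decomposition shows $\|\hat P - P\|_{op} = \|(I-P)\hat P\|_{op}$, and the latter equals $\|U_\perp^\top\hat U\|_{op} = \|G\|_{op}$ because pre- and post-multiplying by the partial isometries $U_\perp$ and $\hat U^\top$ preserves singular values. This closes the chain $\|\hat U\hat U^\top - UU^\top\|_{op}\le\|E\|_{op}/(\gamma-\|E\|_{op})$. The main obstacle is the operator-norm bound for the Sylvester solution: the naive entrywise eigenvalue-separation argument gives the bound easily only in Frobenius norm, and it is the integral representation (exploiting that $\Lambda_2,\hat\Lambda_1$ are Hermitian with separated spectra) that upgrades it to the operator norm needed here. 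The equal-dimension identity $\|\hat P - P\|_{op}=\|(I-P)\hat P\|_{op}$ is the other point requiring care, since it fails without the matching-dimension hypothesis built into the definition of $U$ and $\hat U$ as top-$k$ eigenvectors.
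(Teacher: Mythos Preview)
Your argument is correct. The Sylvester equation $\Lambda_2 G - G\hat\Lambda_1 = -U_\perp^\top E\hat U$ is derived correctly, the integral representation yields the operator-norm bound on $G$ with the separation $\hat\lambda_k-\lambda_{k+1}$, Weyl's inequality converts this to $\gamma-\|E\|_{op}$, and the equal-rank identity $\|\hat P-P\|_{op}=\|(I-P)\hat P\|_{op}=\|U_\perp^\top\hat U\|_{op}$ closes the proof.

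As for comparison: the paper does not actually prove this statement. It is introduced with the sentence ``One key ingredient for the proof is a version of Davis--Kahan's $\sin\Theta$ theorem \ldots\ which we state here for convenience,'' cites \cite{davis1970Rotation}, and then simply \emph{uses} the bound as a black box inside the proofs of Theorems~\ref{thm:svd_theorem_new} and~\ref{thm:groupSVD}. So you have supplied a self-contained proof where the paper only quotes the result. Your route via the Sylvester equation and the Heinz-type integral representation is one of the standard ways to get the operator-norm (rather than Frobenius) version of Davis--Kahan, and your explicit reliance on symmetry of $A$ and $\hat A$ is appropriate, since every invocation in the paper is with symmetric (indeed PSD) matrices.
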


In our context, let's define $U_k$ and $\hat U_k$ denote the eigenspace corresponding to top - $k$ eigenvectors of $\Sigma$ and $S_n$ respectively. Let $\lambda_1 \ge \lambda_2 \ge \dots \ge \lambda_d$ be the eigenvalues of $\Sigma$ and $\hat \lambda_1 \ge \hat \lambda_2 \ge \dots \ge \hat \lambda_d$ be eigenvalues of $S_n$. Applying the above theorem we obtain the following bound:


\begin{equation}
\label{applying_dk}
    \|U_kU_k^* - \hat U_k \hat U_k^*\|_{op} \le \frac{\|\Sigma - S_n\|_{op}}{\eta-\|\Sigma - S_n\|_{op}}
\end{equation}
where $\eta = \lambda_k(
\Sigma) - \lambda_{k+1}(\Sigma)$. To provide a high probability bound on $\|S_n - \Sigma\|_{op}$ we resort to Remark 5.40 (\cite{vershynin2010introduction}), which implies that with probability $\ge 1-2e^{-ct^2}$:


\begin{equation}
    \label{variance_term}
    \|\Sigma - S_n\|_{op} \le \delta \vee \delta^2
\end{equation}
where $\delta = \frac{C\sqrt{d} + t}{\sqrt{n}}$. For $t < (\sqrt{n}\tilde{\gamma} - C\sqrt{d}) \wedge (\sqrt{n\tilde{\gamma}} - C\sqrt{d})$, $\eta > \delta \vee \delta^2$. Hence combining the bounds from equation \ref{applying_dk} and equation \ref{variance_term} we have: 
\begin{equation}
    \label{variance_dk}
    \|U_kU_k^* - \hat U_k \hat U_k^*\|_{op} \le \frac{\delta \vee \delta^2}{\eta-(\delta \vee \delta^2)}
\end{equation}
Here the constant $C,c$ depends only on $\|x_i\|_{\psi_2}$. To conclude the proof, we need a bound on the bias term $\|U_kU_k^T - \tilde{A}_*\tilde{A}_*^T\|_{op}$, which is obtained from another application of Theorem \ref{thm: DK}. From the representation of $\Sigma$ we have: $$\Sigma = A_*A_*^T + B_*B_*^T + \sigma^2 I_d = \tilde{A}_*\Lambda\tilde{A}_*^T + B_*B_*^T + \sigma^2I_d$$ where $\tilde{A}_*$ is the set of eigenvectors of $A_*$ and $\Lambda$ is the diagonal matrix of the eigenvalues. We can apply Theorem \ref{thm: DK} on $\Sigma$ taking $A = \tilde{A}_*\Lambda\tilde{A}_*^T$, $E =  B_*B_*^T + \sigma^2I_d$ and $\Sigma = \hat A$. Here $\lambda_k(A) = \lambda_{min}(A_*A_*^T)$ and $\lambda_{k+1}(A) = 0$. Hence $\gamma = \lambda_{min}(A_*A_*^T)$. As by our assumption $\|B_*B_*^T + \sigma^2I_d\|_{op} < \gamma = \lambda_{min}(A_*A_*^T)$, we obtain   : 
\begin{equation}
\label{bias_term}
    \|U_kU_k^T - \tilde{A}_*\tilde{A}_*^T\|_{op} \le \frac{\|B_*B_*^T + \sigma^2I_d\|_{op}}{\lambda_{min}(A_*A_*^T) - \|B_*B_*^T + \sigma^2I_d\|_{op}} = b
\end{equation}
To conclude the theorem, we provide a bound on $\eta = \lambda_k(\Sigma) - \lambda_{k+1}(\Sigma)$. To upper bound $ \lambda_{k+1}(\Sigma)$ we use Courant-Fisher theorem:
\allowdisplaybreaks
\begin{align*}
    \lambda_{k+1}(\Sigma) = \inf_{S\subseteq \mathbb{R}^d: \text{dim}(S) = d-k} \sup_{x \in S^{d-1} \cap S} x^T \Sigma x & \le \sup_{x \in S^{d-1} \cap \tilde{A}_*^{\perp}} x^T\Sigma x \\
    & = \sup_{x \in S^{d-1} \cap \tilde{A}_*^{\perp}} x^TB_*B_*^Tx + \sigma^2 \le \|B_*B_*^T\|_{op} + \sigma^2
\end{align*}
The lower bound on $\lambda_k(\Sigma)$ can be obtained easily as follows: For any $x \in S^{d-1}$: $$x^T \Sigma x = x^T A_*A_*^T x + x^T B_*B_*^T x + \sigma^2 \ge \lambda_{min}(A_*A_*^T)+\sigma^2$$
This automatically implies $
\lambda_k(\Sigma) \ge \lambda_{min}(A_*A_*^T)+\sigma^2$. Hence combining the bound on $\lambda_k(\Sigma)$ and $\lambda_{k+1}(\Sigma)$ we get:
\begin{equation}
    \label{gamma_bound}\eta = \lambda_k(\Sigma) - \lambda_{k+1}(\Sigma) \ge \lambda_{min}(A_*A_*^T) - \|B_*B_*^T\|_{op} = \tilde{\gamma}
\end{equation} 
 Combining  equation \ref{variance_term}, \ref{bias_term} and \ref{gamma_bound} and using the fact that: 
 $$\left\|\hat U\hat U^{\top} - \tilde A_*\tilde A_*^{\top}\right\|_{op}  = \left\|\hat\Sigma - \Sigma_0\right\|_{op}$$
 we conclude the theorem. 
\end{proof}

\subsection{Proof of Theorem \ref{thm:groupSVD}}
\begin{proof}
The variance covariance matrix of $\varphi_i$ can be represented as following: 
$$\Sigma_{\varphi} = A_*A_*^T + B_*B_*^T + \sigma^2I_d$$ As in the proof of the previous theorem, define $\lambda_1 \ge \dots \ge \lambda_d$  as the eigenvalues of $\Sigma_{\varphi}$ and $\hat \lambda_1 \ge \dots \ge \hat \lambda_d$ as the eigenvalues of $S_n$. Also define by $U_k$ (respectively $\hat U_k$) to be the matrix containing top-k eigenvectors of $\Sigma$ (respectively $S_n$) and $\eta = \lambda_k - \lambda_{k+1}$. Using Davis-Kahan's $\sin \Theta$ theorem (see Theorem \ref{thm: DK}), we conclude that: 
\begin{equation}
    \label{eq:DK_2}
    \|\hat U_k \hat U_k^T - U_kU_K^T\|_{op} \le \frac{\|S_n - \Sigma_{\varphi}\|_{op}}{\gamma - \|S_n - \Sigma_{\varphi}\|_{op}}
\end{equation}
provided that $\eta > \|S_n - \Sigma_{\varphi}\|_{op}$. Using matrix concentration inequality (see remark 5.40 of (\cite{vershynin2010introduction})) we get that with probability $> 1-2e^{-ct^2}$: 
\begin{equation}
    \label{variance_bound_2}
    \|S_n - \Sigma\|_{op} \le \delta \vee \delta^2 + \frac{t}{n}
\end{equation}
where $\delta = (C\sqrt{d} + t)/\sqrt{n}$, for all $t \ge 0$. The difference between this and equation \ref{variance_term} in Theorem \ref{thm:svd_theorem_new} is the extra term $t/n$, which appears due to mean centering the samples. The constants $c, C$ only depends on the $\psi_2$ norm of $\varphi_i$. Combining equation \ref{eq:DK_2} and \ref{variance_bound_2} we conclude that, with high probability we have 
\begin{equation*}
    \|\hat U_k \hat U_k^T - U_kU_k^T\|_{op}  \le \frac{\delta \vee \delta^2 + t/n}{\eta - (\delta \vee \delta^2) - t/n}
\end{equation*}
when $t/n + \delta \vee \delta^2 < \eta$. As before, we apply Theorem \ref{thm: DK} to control the bias. Towards that end, define $A = A_*A_*^T = \tilde{A}_* \Lambda \tilde{A}_*^T$, where $\tilde{A}_*$ is the matrix of eigenvectors of $A_*$ and $\Lambda$ is diagonal matrix with the eigenvalues of $A_*A_*^T$. Also define $E = B_*B_*^T + \sigma^2I_d$ and $\hat A = \Sigma_{\varphi}$. Now, as before, $\lambda_k(A) = \lambda_{min}(A_*A_*^T)$ and $\lambda_{k+1}(A) = 0$. Hence $\gamma = \lambda_k(A) - \lambda_{k+1}(A) = \lambda_{min}(A_*A_*^T)$. Applying Theorem \ref{thm: DK} we conclude: 
\begin{equation} 
    \label{bias_term_2}
    \|U_kU_k^T - \tilde{A}_*\tilde{A}_*^T\|_{op} \le \frac{\|B_*B_*^T + \sigma^2 I_d\|_{op}}{\lambda_{min}(A_*A_*^T) - \|B_*B_*^T + \sigma^2I_d\|_{op}}
\end{equation}
Finally, we use Courant-Fischer Min-max theorem to provide an upper bound on $\eta = \lambda_k(\Sigma_{\varphi}) - \lambda_{k+1}(\Sigma_{\varphi})$. As in the previous proof we have: 
\allowdisplaybreaks
\begin{align*}
    \lambda_{k+1}(\Sigma_{\varphi})
    = \inf_{S\subseteq \mathbb{R}^d: \text{dim}(S) = k+1} \sup_{x \in S^{d-1} \cap S} x^T \Sigma_{\varphi} x & \le \sup_{x \in S^{d-1} \cap \tilde{A}_*^{\perp}} x^T\Sigma_{\varphi} x \\
    & = \sup_{x \in S^{d-1} \cap \tilde{A}_*^{\perp}} x^TB_*B_*^Tx + \sigma^2 \le \|B_* B_*^T\|_{op} + \sigma^2
\end{align*}

\begin{align*}
    \lambda_{k+1}(\Sigma_{\varphi})
    = \sup_{S\subseteq \mathbb{R}^d: \text{dim}(S) = d-k} \sup_{x \in S^{d-1} \cap S} x^T \Sigma_{\varphi} x & \le \sup_{x \in S^{d-1} \cap \tilde{A}_*^{\perp}} x^T\Sigma_{\varphi} x \\
    & = \sup_{x \in S^{d-1} \cap \tilde{A}_*^{\perp}} x^TB_*B_*^Tx + \sigma^2 \le \|B_* B_*^T\|_{op} + \sigma^2
\end{align*}
To get a lower bound on $\lambda_k(\Sigma_{\varphi})$, we use the the other version of Courant-Fischer Minmax theorem: $$\lambda_k(\Sigma_{\varphi}) = \max_{S: dim(S) = d-k+1} \min_{x \in S^{d-1} \cap S} x^T\Sigma x$$
Using this we conclude: $$\lambda_k(\Sigma_{\varphi}) \ge \lambda_{min}(A_*A_*^T) + \sigma^2$$
Hence combining the bound on $\lambda_k(\Sigma_{\varphi})$ and $\lambda_{k+1}(\Sigma_{\varphi})$ we get:
\begin{equation}
    \label{gamma_bound_2}\eta = \lambda_k(\Sigma_{\varphi}) - \lambda_{k+1}(\Sigma_{\varphi}) \ge \lambda_{min}(A_*A_*^T) - \|B_* B_*^T\|_{op} = \tilde{\gamma}
\end{equation} 
Combining equation \ref{variance_bound_2}, \ref{bias_term_2} and \ref{gamma_bound_2} 
and using the fact that: 
 $$\left\|\hat U\hat U^{\top} - \tilde A_*\tilde A_*^{\top}\right\|_{op}  = \left\|\hat\Sigma - \Sigma_0\right\|_{op}$$
 we conclude the theorem. 
\end{proof}

\subsection{Proof of Theorem \ref{thm:provably-fair-training}}
\begin{proof}
The proof of Theorem \ref{thm:provably-fair-training} essentially follows form Proposition 3.2 and Proposition 3.1 of \cite{yurochkin2020Training}. Note that from Theorem \ref{thm:svd_theorem_new}, for any $x_1, x_2 \in \mathcal{X}$: 
\begin{align*}
    \left|\hat d_x^2(x_1, x_2) - (d^*_x(x_1, x_2))^2\right|& = \left|\left(\varphi_1- \varphi_2\right)^{\top}\left(\hSigma - \Sigma^*\right)\left(\varphi_1- \varphi_2\right)\right| \\
    & \le \|\hSigma - \Sigma^*\|_{op}\|\varphi_1- \varphi_2\|^2_2 \\
    & \le R^2\|\hSigma - \Sigma^*\|_{op} \\
    & \le R^2 \left[b + \frac{\delta \vee \delta^2}{\tilde{\gamma} - (\delta \vee \delta^2)}\right]
\end{align*}
where the last inequality is true with probability greater than or equal to $1-2e^{-ct^2}$ from Theorem \ref{thm:svd_theorem_new}. This justifies taking $\delta_c \ge \left[b + \frac{\delta \vee \delta^2}{\tilde{\gamma} - (\delta \vee \delta^2)}\right]$ which along with Proposition 3.1 and 3.2 of \cite{yurochkin2020Training} completes the proof. 
\end{proof}

\addtolength{\tabcolsep}{-3pt}
\begin{table*}[H]
\centering
\caption{Association tests code names}
\vspace{.05in}
\begin{tabular}{lc}
\toprule
FLvINS & Flowers vs. insects \citep{greenwald1998measuring}\\
INSTvWP & Instruments vs. weapons \citep{greenwald1998measuring}\\
MNTvPHS & Mental vs. physical disease \citep{monteith2011implicit}\\
EAvAA & Europ-Amer vs Afr-Amer names \citep{caliskan2017Semantics}\\
EAvAA\citep{bertrand2004Are} & Europ-Amer vs Afr-Amer names \citep{bertrand2004Are}\\
MNvFN & Male vs. female names \citep{nosek2002harvesting}\\
MTHvART & Math vs. arts \citep{nosek2002harvesting}\\
SCvART\citep{nosek2002math} & Science vs. arts \citep{nosek2002math}\\
YNGvOLD & Young vs. old people's names \citep{nosek2002harvesting}\\
\midrule
PLvUPL &  Pleasant vs. unpleasant \citep{greenwald1998measuring} \\
TMPvPRM & Temporary vs. permanent \citep{monteith2011implicit} \\
PLvUPL\citep{nosek2002harvesting} &  Pleasant vs. unpleasant \citep{nosek2002harvesting} \\
CARvFAM & Career vs. family \citep{nosek2002harvesting}\\
MTvFT & Male vs. female terms \citep{nosek2002harvesting}\\
MTvFT\citep{nosek2002math} & Male vs. female terms \citep{nosek2002math}\\
\bottomrule
\end{tabular}
\label{table:association_names}
\end{table*}

\end{document}